\documentclass[letterpaper]{article} 
\usepackage{aaai2026}  
\usepackage{times}  
\usepackage{helvet}  
\usepackage{courier}  
\usepackage[hyphens]{url}  
\usepackage{graphicx} 
\usepackage{amsmath}
\usepackage{amsfonts}
\usepackage{natbib}  
\usepackage{caption} 
\usepackage{algorithm}
\usepackage[noEnd]{algpseudocodex}
\usepackage{amsthm}

\usepackage{comment}

\usepackage[short,c2,nocomma]{optidef}

\usepackage{newfloat}
\usepackage{listings}
\DeclareCaptionStyle{ruled}{labelfont=normalfont,labelsep=colon,strut=off} 
\floatstyle{ruled}
\newfloat{listing}{tb}{lst}{}
\floatname{listing}{Listing}
\def\Tar{{\textnormal{Tar}}}
\def\ntar{{n_\textnormal{tar}}}
\def\nagt{{n_\textnormal{agt}}}
\def\PartialTour{{\textnormal{PartialTour}}}

\def\LFDT{{\textnormal{LFDT}}}
\def\EFAT{{\textnormal{EFAT}}}
\def\Length{{\textnormal{Len}}}
\def\taua{{\tau_\textnormal{a}}}
\def\cred{{c_\textnormal{red}}}
\def\cstarred{{c^*_\textnormal{red}}}

\def\nptour{{n}}

\DeclareMathOperator*{\argmin}{arg\,min}

\newcommand{\pvec}[1]{\vec{#1}\mkern2mu\vphantom{#1}}

\newtheorem{theorem}{Theorem}{\bfseries}{\itshape}
{\bfseries}{\itshape}
\newtheorem{lemma}{Lemma}{\bfseries}{\itshape}
\newtheorem{remark}{Remark}{\itshape}{\rmfamily}

\makeatletter
\newcommand{\subalign}[1]{%
  \vcenter{%
    \Let@ \restore@math@cr \default@tag
    \baselineskip\fontdimen10 \scriptfont\tw@
    \advance\baselineskip\fontdimen12 \scriptfont\tw@
    \lineskip\thr@@\fontdimen8 \scriptfont\thr@@
    \lineskiplimit\lineskip
    \ialign{\hfil$\m@th\scriptstyle##$&$\m@th\scriptstyle{}##$\hfil\crcr
      #1\crcr
    }%
  }%
}
\makeatother

\title{Optimal Solutions for the Moving Target Vehicle Routing Problem via Branch-and-Price with Relaxed Continuity}
\author{
    Anoop Bhat\textsuperscript{\rm 1},
    Geordan Gutow\textsuperscript{\rm 2},
    Zhongqiang Ren\textsuperscript{\rm 3}
    Sivakumar Rathinam\textsuperscript{\rm 4}
    Howie Choset\textsuperscript{\rm 1}
}
\affiliations {
    \textsuperscript{\rm 1}Robotics Institute at Carnegie Mellon University, Pittsburgh, PA 15213\\
    \textsuperscript{\rm 2}Mechanical and Aerospace Engineering at Michigan Technological University, Houghton, MI 49931\\
    \textsuperscript{\rm 3}UM-SJTU Joint Institute and Department of Automation at Shanghai Jiao Tong University, Shanghai, China\\
    \textsuperscript{\rm 4}Department of Mechanical Engineering and Department of Computer Science and Engineering at Texas A\&M University, College Station, TX 77843\\
    agbhat@andrew.cmu.edu, gmgutow@mtu.edu, zhongqiang.ren@sjtu.edu.cn, srathinam@tamu.edu, choset@andrew.cmu.edu.
}

\begin{document}

\maketitle

\begin{abstract}
 

The Moving Target Vehicle Routing Problem (MT-VRP) seeks trajectories for several agents that intercept a set of moving targets, subject to speed, time window, and capacity constraints. We introduce an exact algorithm, Branch-and-Price with Relaxed Continuity (BPRC), for the MT-VRP. The main challenge in a branch-and-price approach for the MT-VRP is the pricing subproblem, which is complicated by moving targets and time-dependent travel costs between targets. Our key contribution is a new labeling algorithm that solves this subproblem by means of a novel dominance criterion tailored for problems with moving targets. Numerical results on instances with up to 25 targets show that our algorithm finds optimal solutions more than an order of magnitude faster than a baseline based on previous work, showing particular strength in scenarios with limited agent capacities.

\end{abstract}


\section{Introduction}\label{sec:intro}
The Vehicle Routing Problem (VRP) is one of the most well-studied problems in logistics, with numerous applications \cite{toth2014vehicle,archetti2025beyond}. The standard VRP considers a set of targets, each with a demand for goods, and a fleet of agents (often called vehicles), each with a capacity limit on the amount of goods it can deliver. Given the travel costs between every pair of targets, as well as between each target and the depot (the starting location of the agents), the VRP seeks a sequence of targets for each agent such that each target is visited by exactly one agent, the total demand serviced by each agent does not exceed its capacity, and the sum of the travel costs of the agents is minimized.

In this article, we consider a generalization of the VRP in which the targets are moving, and each target has one or more time windows in which it can be visited (Fig. \ref{fig:intro_fig}). We call this generalization the Moving Target VRP (MT-VRP). We are motivated to solve this generalization due to its applications, including defense \cite{helvig2003moving,smith2021assessment,stieber2022DealingWithTime}, shipping supplies to ships at sea \cite{brown2017scheduling}, aerial fueling \cite{barnes2004solving}, and recharging underwater vehicles monitoring the seafloor \cite{Li2019RendezvousPlanning}.

\begin{figure}
    \centering
    \includegraphics[width=0.47\textwidth]{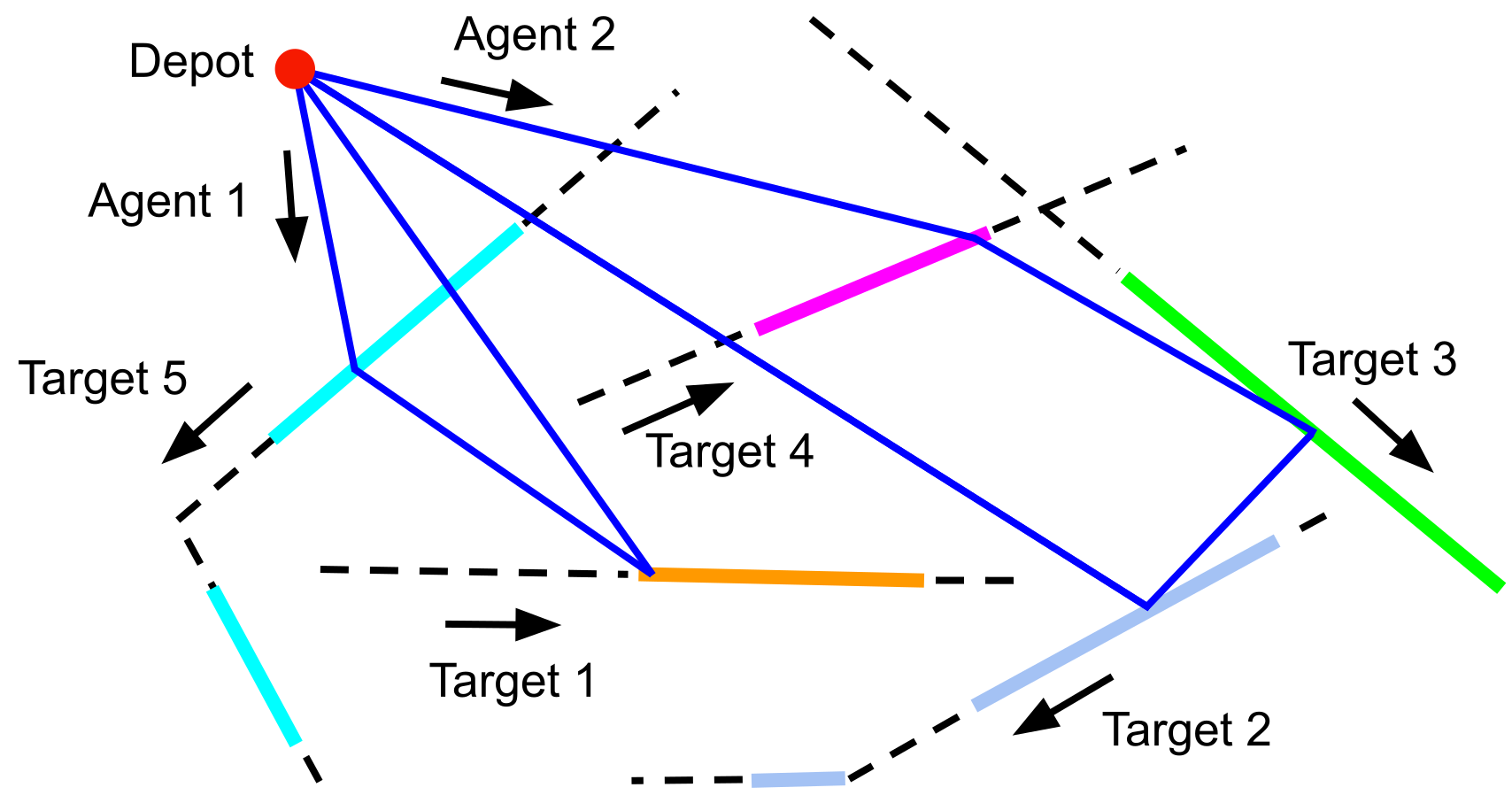}
    \caption{Targets move along piecewise-linear trajectories with time windows shown in bold lines. Two agents begin at the depot and collectively intercept all targets. Trajectories of agents are shown in blue.}
    \label{fig:intro_fig}
\end{figure}


The MT-VRP generalizes the Traveling Salesman Problem (TSP) and is therefore NP-hard \cite{hammar1999,helvig2003moving}. Although previous work has investigated simplified versions of the MT-VRP without capacity constraints \cite{philip2025Mixed, stieber2022DealingWithTime}, the capacitated version has not been considered and is the focus of this paper. Since state-of-the-art exact VRP solvers are based on the branch-and-price method \cite{costa2019exact}, we present a branch-and-price approach for the MT-VRP.

A key aspect of the branch-and-price method involves alternately solving a \emph{master problem} and a \emph{pricing problem.} The master problem seeks to assign each agent a single sequence of target-time window pairings from a subset of all possible sequences. We refer to a target-window pairing as a \emph{target-window}. The pricing problem attempts to add sequences to the subset of target-window sequences considered in the master problem. In particular, the pricing problem seeks one or more sequences with negative \emph{reduced cost}, which is the cost of the sequence (i.e., the travel distance required to visit its targets), plus additional bias terms, discussed in Section \ref{sec:pricing_problem}. Solving the pricing problem efficiently requires a method for identifying when a sequence $\Gamma$ \emph{dominates} another sequence $\Gamma'$ ending at the same target-window, meaning no extension of $\Gamma'$ can have a lower (i.e. more negative) reduced cost than the same extension of $\Gamma$. 

Typical VRP dominance checks assume that when we append target-windows to $\Gamma$, the cost incurred by visiting these target-windows only depends on the final target-window in $\Gamma$, and that the same goes for $\Gamma'$. Under this assumption, appending the same sequence of target-windows to $\Gamma$ and $\Gamma'$ would result in the same increase in cost. However, as shown in Fig.~\ref{fig:conventional_dominance_fails_intro}, this assumption breaks when we have moving targets. In this case, appending the same sequence of target-windows to $\Gamma$ and $\Gamma'$ actually increases the cost of $\Gamma$ more than for $\Gamma'$. This is because the cost of the sequence obtained by appending target-windows to $\Gamma$ is computed using a continuous trajectory optimization problem that depends on the entire sequence of target-windows in $\Gamma$, not just its final target-window.

To address this challenge, we introduce a new labeling algorithm with a dominance check for the MT-VRP that compares an upper bound on the cost of $\Gamma$ and a lower-bound on the cost of $\Gamma'$, which defers the expense of solving a continuous optimization problem until necessary. The upper bound is efficiently calculated by constructing a feasible trajectory using sampled points from the target-windows, whereas the lower bound is computed via a trajectory planning problem with relaxed continuity constraints, inspired by \cite{philip2025C}. We refer to our approach as Branch-and-Price with Relaxed Continuity (BPRC). We present numerical results that demonstrate in problems where agents have small capacities, BPRC achieves more than an order of magnitude speedup over a baseline method, which incorporates capacity constraints from \cite{desrochers1987vehicle} into a state-of-the-art multi-agent MT-TSP solver \cite{philip2025Mixed}. We also present ablation studies to further validate our approach.

\begin{figure}
    \centering
    \includegraphics[width=0.4\textwidth]{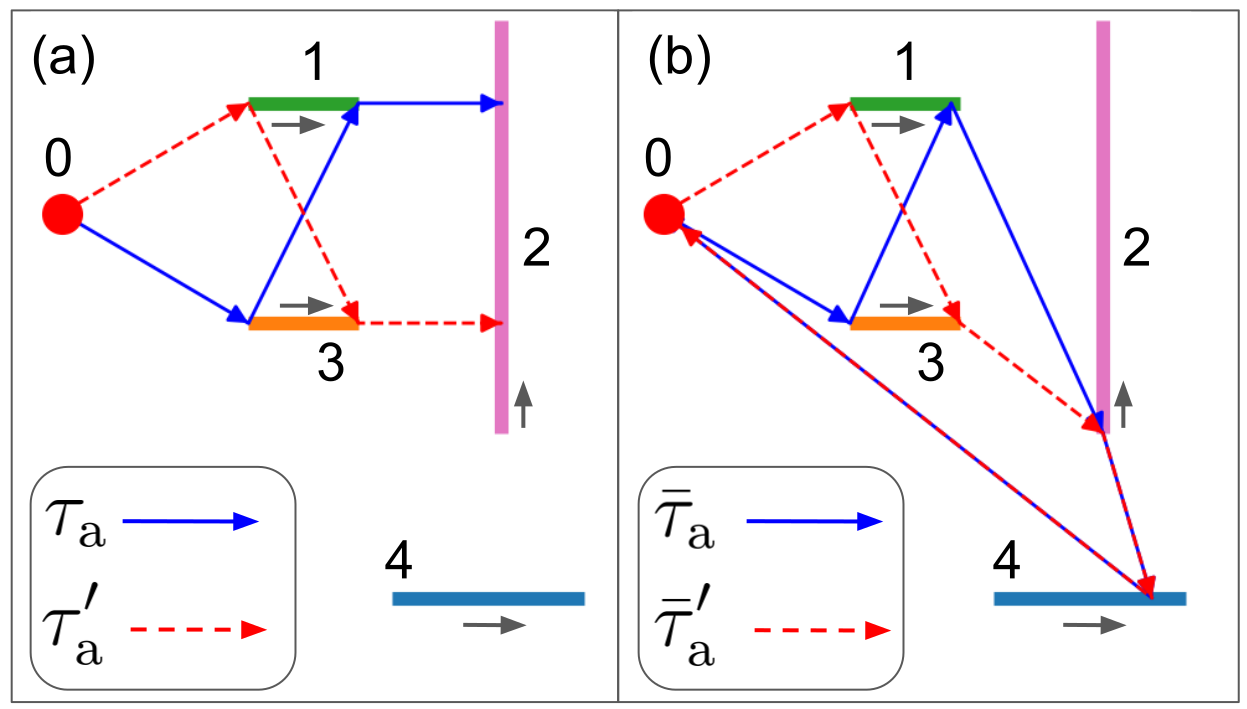}
    \caption{Example where existing VRP dominance checks fail for the MT-VRP. Since each target has one time window in this example, we represent a target-window using the index of the target. We treat the depot as a fictitious target 0. (a) $\taua$ is an optimal (i.e. minimum-distance) trajectory visiting the sequence of targets $\Gamma = (0, 3, 1, 2)$, and $\taua'$ is an optimal trajectory visiting the sequence $\Gamma' = (0, 1, 3, 2)$.\footnotemark{} Conventional VRP dominance checks would conclude that $\Gamma$ dominates $\Gamma'$, since $\taua$ and $\taua'$ have the same cost.\footnotemark{} (b) $\bar{\tau}_\text{a}$ optimally visits $\bar{\Gamma} = (0, 3, 1, 2, 4, 0)$ obtained by appending 4 and 0 to $\Gamma$, and $\bar{\tau}_\text{a}'$ optimally visits $\bar{\Gamma}' = (0, 1, 3, 2, 4, 0)$, obtained by appending 4 and 0 to $\Gamma'$. $\bar{\tau}_\text{a}$ travels more distance than $\bar{\tau}_\text{a}'$, so $\bar{\Gamma}$ is worse than $\bar{\Gamma}'$. Thus $\Gamma$ does not dominate $\Gamma'$.}
    \label{fig:conventional_dominance_fails_intro}
\end{figure}
\addtocounter{footnote}{-1}
\footnotetext{Agents in the MT-VRP can change their speed, which is why $\taua$ and $\taua'$ can travel the same distance, even though $\taua$ meets target 2 later than $\taua'$: $\taua$ can simply arrive at its interception position, then stop and wait for target 2.}
\addtocounter{footnote}{1}
\footnotetext{While dominance checks technically compare \emph{reduced cost}, which adds bias terms to the cost, the bias terms for $\Gamma$ and $\Gamma'$ are equal in Fig.~\ref{fig:conventional_dominance_fails_intro} (a), and the same goes for $\bar{\Gamma}$ and $\bar{\Gamma}'$ in Fig. \ref{fig:conventional_dominance_fails_intro} (b).}

\section{Related Work}
\subsubsection{VRP:}\label{sec:cvrp_related_work} Approaches for solving VRP variants have utilized two main types of integer programming formulations: \emph{compact} and \emph{extended}. In compact formulations, the number of variables and constraints is polynomial in the number of targets~\cite{desrochers1987vehicle,bard2002branch}. In contrast, extended formulations have a number of variables and constraints that is exponential~\cite{ozbaygin2017branch,desrochers1992new,costa2019exact}. Although compact formulations are smaller optimization problems, state-of-the-art VRP solvers, based on branch-and-price, leverage extended formulations. This is because the extended formulations tend to have tighter relaxations, i.e., the optimal cost of an extended formulation's convex relaxation tends to be closer to the optimal cost of the non-relaxed problem. Our work employs an extended formulation of the MT-VRP, and our numerical results in Section \ref{sec:numerical_results} show that the extended formulation tends to be tighter than a compact one.

\subsubsection{Multi-Agent MT-TSP:} Prior work~\cite{philip2025Mixed} finds optimal solutions for the Multi-Agent MT-TSP without capacity constraints using a mixed-integer conic program (MICP), assuming that targets move along piecewise-linear trajectories. We make the same piecewise-linear assumption in our work. Our baseline is based  on \cite{philip2025Mixed}. Note that, using the terminology from the previous section, the MICP from \cite{philip2025Mixed} is compact.

\subsubsection{VRP with Floating Targets:} The VRP with Floating Targets (VRPFT) \cite{gambella2018vehicle} is a problem similar to the MT-VRP, but where the movements of the targets must also be planned, and the targets do not have time windows. \cite{gambella2018vehicle} applies branch-and-price to the VRPFT, though the pricing problem is solved using an off-the-shelf mixed-integer program solver. In other branch-and-price methods, it is common to instead use a labeling algorithm for the pricing problem \cite{costa2019exact}. In contrast to \cite{gambella2018vehicle}, our work uses a labeling algorithm, introducing a new dominance criterion for moving targets.

\section{Problem Setup}\label{sec:problem_setup}
We consider $\nagt$ agents and $\ntar$ targets moving in  $\mathbb{R}^2$. All agents start at the same depot, with position $q_0 \in \mathbb{R}^2$. They also have an identical speed limit $v_\text{max} \in \mathbb{R}_{\geq 0}$, as well as an identical capacity $d_\text{max} \in \mathbb{R}_{\geq 0}$. 

Each target $i$ has a demand $d_i \in \mathbb{R}_{\geq 0}$ that must be met by exactly one agent. It can be visited by any agent during one of its time windows in $\{[\underline{t}_{i,1},\overline{t}_{i,1}], \dots, [\underline{t}_{i,n_\text{win}(i)}, \overline{t}_{i,n_\text{win}(i)}]\}$, where $[\underline{t}_{i,j}, \overline{t}_{i,j}] \subseteq \mathbb{R}_{\geq 0}$ is the $j$th time window of target $i$, and $n_\text{win}(i)$ is the number of time windows of target $i$. Each target $i$ moves along a trajectory $\tau_i: \bigcup\limits_{j = 1}^{n_\text{win}(i)}[\underline{t}_{i,j}, \overline{t}_{i,j}] \rightarrow \mathbb{R}^2$ and has a constant velocity within a time window, but possibly different velocities in different time windows. We assume no target at any time moves faster than $v_\text{max}$.

An agent's trajectory is \emph{speed-admissible} if the agent's speed at all times along its trajectory satisfies the speed limit. The cost of an agent trajectory $\taua$, denoted as $c(\taua)$, is its distance traveled, if $\taua$ is speed-admissible, and $\infty$ otherwise. An agent's trajectory $\taua$ \emph{intercepts} target $i$ if there exists a time $t$ in one of target $i$'s time windows such that (1) $\taua(t) = \tau_i(t)$, and (2) $\taua$ \emph{claims} target $i$ at time $t$.
The notion of ``claiming" is needed in scenarios where we plan a trajectory $\taua$ with the intent of intercepting target $i$, then $j$, but $\taua$ unintentionally satisfies $\taua(t) = \tau_{k}(t)$ for some target $k$ between the interceptions of $i$ and $j$.

The MT-VRP seeks a speed-admissible trajectory for each agent such that (i) each agent's trajectory begins and ends at the depot, (ii) every target is intercepted by exactly one agent, (iii) the sum of the demands of targets intercepted by each agent does not exceed $d_\text{max}$, and (iv) the sum of the agents' trajectory costs is minimized.

\section{Integer Linear Program (ILP) for MT-VRP}\label{sec:target_window_graph}
We formulate the MT-VRP on a graph $\mathcal{G}_\text{tw} = (\mathcal{V}_\text{tw}, \mathcal{E}_\text{tw})$ referred to as the \emph{target-window-graph}. Each node in $\mathcal{V}_\text{tw}$ is a target paired with one of its time windows. We call such a pairing a \emph{target-window}. The \emph{target-windows} for a target $i \in \mathcal{V}_\text{tw}$ are represented as $\gamma_{i,j}=(i, [\underline{t}_{i,j}, \overline{t}_{i,j}])$ for $j \in \{1, \dots, n_\text{win}(i)\}$. To simplify notation, we refer to the depot as a fictitious target $0$ with demand $d_0 = 0$ and trajectory $\tau_0$ satisfying $\tau_0(t) = q_0$ for all $t\geq 0$. We also define a corresponding target-window $\gamma_{0,1} = \gamma_0$ = $(0, [0, \infty))$. An agent trajectory $\taua$ \emph{intercepts} target-window $\gamma_{i,j}$ if $\taua$ intercepts target $i$ at a time $t \in [\underline{t}_{i,j}, \overline{t}_{i,j}]$. 

$\mathcal{E}_\text{tw}$ contains an edge from target-window $\gamma_{i,j}$ to $\gamma_{i',j'}$ if and only if $i \neq i'$. This edge records the \emph{latest feasible departure time} from $\gamma_{i,j}$ to $\gamma_{i',j'}$, denoted by $\LFDT(\gamma_{i,j},\gamma_{i',j'})$. This value specifies the maximum $t \in [\underline{t}_{i,j}, \overline{t}_{i,j}]$ for which there exists a speed-admissible trajectory that intercepts $\gamma_{i',j'}$ after intercepting $\gamma_{i,j}$ at time $t$.\footnote{\cite{philip2025C} describes how to compute $\LFDT$.}

A \emph{tour} in $\mathcal{G}_\text{tw}$ is a path beginning and ending with $\gamma_0$, visiting at most one target-window per non-fictitious target, such that the sum of demands of visited targets is no larger than $d_\text{max}$. A \emph{partial tour} has the same definition as a tour, but does not need to end with $\gamma_0$. $\Gamma[k]$ denotes the $k$th element of a partial tour $\Gamma$, and $\Length(\Gamma)$ denotes the number of elements; we use the same notation for any \emph{sequence} also. An agent trajectory $\taua$ \emph{executes} a partial tour $\Gamma$ if $\taua$ intercepts the target-windows in $\Gamma$ in sequence. The optimal cost of a partial tour $\Gamma$, denoted as $c^*(\Gamma)$, is the minimum cost over all trajectories executing $\Gamma$. $c^*(\Gamma)$, and the associated trajectory, can be computed using the second-order cone program (SOCP) in Appendix \ref{sec:ptour_dist} in the supplementary material\footnote{This SOCP can be obtained by modifying the mixed-integer SOCP from \cite{philip2025Mixed}, which considers the case where the sequence of target-windows is not fixed.}.

We now formulate an ILP where a solution to the MT-VRP is described by a set of tours, $\mathcal{F}_\text{sol}$. For a tour $\Gamma$, let $\alpha(i, \Gamma) = 1$ if $\Gamma$ contains a target-window $\gamma_{i,j}$ of target $i$ and $\alpha(i, \Gamma) = 0$ otherwise. Let the set of all possible tours for the agents be $\mathcal{S} = \{\Gamma_1, \Gamma_2, \dots, \Gamma_{|\mathcal{S}|}\}$. For each $\Gamma_k \in \mathcal{S}$, define a binary variable $\theta_k$, which equals 1 if $\Gamma_k$ is chosen to be included in $\mathcal{F}_\text{sol}$ and 0 otherwise. Our ILP for the MT-VRP is as follows:

\begin{mini!}
{\{\theta_k\}_{\Gamma_k \in \mathcal{S}}}{\sum\limits_{\Gamma_k \in \mathcal{S}}c^*(\Gamma_k)\theta_k\label{eqn:ilp_objective}}{\label{optprob:mt_cvrp_ilp}}{}
\addConstraint{\sum\limits_{\Gamma_k \in \mathcal{S}}\theta_k \leq \nagt\label{eqn:ilp_agent_limit}}{}
\addConstraint{\sum\limits_{\Gamma_k \in \mathcal{S}}\alpha(i, \Gamma_k)\theta_k\geq 1 \;\; \forall i \in \{1, \dots, \ntar\}\label{eqn:ilp_visit_all_targets}}{}
\addConstraint{\theta_k \in \{0, 1\} \;\; \forall k \in \{1,\dots,|\mathcal{S}|\}\label{eqn:ilp_theta_nonneg_integer}}{}
\end{mini!}
\eqref{eqn:ilp_objective} minimizes the sum of costs of the selected tours. \eqref{eqn:ilp_agent_limit} ensures that we select up to $\nagt$ tours. \eqref{eqn:ilp_visit_all_targets} ensures that each target is visited by at least one tour. \eqref{eqn:ilp_theta_nonneg_integer} constrains $\theta_k$ to be binary. Capacity constraints are implicit in ILP \eqref{optprob:mt_cvrp_ilp}, since a tour must fully satisfy the demands of all targets it visits.

A convex relaxation of ILP \eqref{optprob:mt_cvrp_ilp} consists of \eqref{eqn:ilp_objective}-\eqref{eqn:ilp_visit_all_targets}, but replaces \eqref{eqn:ilp_theta_nonneg_integer} with the constraint $\theta_k \geq 0$; as in \cite{feillet2010tutorial}, we remove the upper limit on $\theta_k$ in the relaxation. An optimal solution $\theta^* = (\theta_1^*, \theta_2^*, \dots, \theta_{|\mathcal{S}|}^*)$ to ILP \eqref{optprob:mt_cvrp_ilp} is feasible for this relaxation, but not necessarily optimal: some fractional solution may be optimal instead. Thus, the branch-and-bound in Section \ref{sec:branch_and_bound} iteratively breaks ILP \eqref{optprob:mt_cvrp_ilp} into subproblems, each disallowing some set of edges $\mathcal{B}$ to eliminate fractional solutions. The aim is to generate some subproblem where $\theta^*$ is optimal for the relaxation. In particular, for a set $\mathcal{B} \subseteq \mathcal{E}_\text{tw}$, we define the subproblem ILP-$\mathcal{B}$ as ILP \eqref{optprob:mt_cvrp_ilp}, with the constraint that $\theta_k = 0$ for each $\Gamma_k$ traversing some $e \in \mathcal{B}$. Let LP-$\mathcal{B}$ be the convex relaxation of ILP-$\mathcal{B}$.

\section{Branch-and-Price with Relaxed Continuity}\label{sec:branch_and_bound}

We first provide an overview of our approach and then focus on the pricing problem. At the start of BPRC, we generate an initial feasible solution $\mathcal{F}_\text{inc}$ with cost $c_\text{inc}$, using the method from Section \ref{sec:gen_feas_tours}. We call $\mathcal{F}_\text{inc}$ the \emph{incumbent} and continually update it to be the best solution found so far. We also initialize a set of tours  $\mathcal{F}$, setting it equal to $\mathcal{F}_\text{inc}$; we continually add tours to $\mathcal{F}$ throughout the algorithm.

BPRC then constructs a branch-and-bound tree, where a node is a set $\mathcal{B} \subseteq \mathcal{E}_\text{tw}$ of disallowed edges. We expand nodes from a stack in a depth-first fashion. When expanding a node $\mathcal{B}$, we solve LP-$\mathcal{B}$ (traditionally called the \emph{master problem} in branch-and-price) using column generation. That is, when we initially solve LP-$\mathcal{B}$, we only optimize over $\theta_k$ variables for tours $\Gamma_k \in \mathcal{F}$, traversing no edge in $\mathcal{B}$, to keep the problem size tractable, assuming all other $\theta_k$ values are zero. This restriction of LP-$\mathcal{B}$ is known as the \emph{restricted master problem} (RMP). After solving the RMP using an off-the-shelf LP solver, we solve a \emph{pricing problem} to add tours to $\mathcal{F}$ that may improve the optimal cost of the RMP. Section \ref{sec:solving_pricing_problem} describes our method of solving the pricing problem. We alternate between the RMP and the pricing problem until the pricing problem adds no tours to $\mathcal{F}$, implying we have an optimal solution to LP-$\mathcal{B}$.

Whenever we find a new best integer solution during column generation, we update $\mathcal{F}_\text{inc}$ and $c_\text{inc}$. Additionally, the first time we solve the RMP for branch-and-bound node $\mathcal{B}$, the LP solver may return infeasible. This means that every feasible MT-VRP solution that can be assembled from the tours in $\mathcal{F}$ traverses some edge in $\mathcal{B}$. If so, we attempt to use the method from Section \ref{sec:gen_feas_tours} to produce a feasible MT-VRP solution $\mathcal{F}_\text{new}$ traversing no edge in $\mathcal{B}$, set $\mathcal{F} \leftarrow \mathcal{F} \cup \mathcal{F}_\text{new}$, then solve the RMP again. If we do not find any $\mathcal{F}_\text{new}$, LP-$\mathcal{B}$ is infeasible, and we discard the branch-and-bound node $\mathcal{B}$.

When we find an optimal solution $\theta$ to LP-$\mathcal{B}$, if $\theta$ is integer, $\mathcal{B}$ has no successors. If $\theta$ is fractional, we apply the ``conventional branching" rule from \cite{ozbaygin2017branch} as follows. First, we let the flow of an edge $e$ be the sum of $\theta_k$ values over all $\Gamma_k$ traversing $e$.
We select an edge $e$ not incident to $\gamma_0$ with minimum flow, then generate two successors to $\mathcal{B}$, where in the first successor, $e$ is disallowed, and in the second successor, $e$ is mandated by disallowing other edges appropriately (see \cite{ozbaygin2017branch}). This successor generation procedure ensures that an optimal integer solution to ILP-$\mathcal{B}$ is feasible for one successor, but the fractional solution $\theta$ is feasible in neither. Each successor $\mathcal{B}'$ is given a lower bound equal to the cost of $\theta$ for LP-$\mathcal{B}$. $\mathcal{B}'$ is pruned upon expansion if its lower bound is larger than $c_\text{inc}$.

\subsection{Pricing Problem}\label{sec:pricing_problem}
The pricing problem seeks a set of tours $\mathcal{F}_\text{price} = \{{}^1\Gamma, {}^2\Gamma, \dots, {}^{n_\text{found}}\Gamma\}$ such that the RMP on $\mathcal{F} \cup \mathcal{F}_\text{price}$ has a smaller optimal cost than the RMP on $\mathcal{F}$. $n_\text{found}$ is the number of tours found when solving the pricing problem and is not a user-specified parameter. We find $\mathcal{F}_\text{price}$ by leveraging the optimal dual solution to the RMP, denoted as $(\lambda_0, \lambda_1, \dots, \lambda_{\ntar})$, where $\lambda_0 \in \mathbb{R}_{\leq 0}$ is the dual variable corresponding to the constraint \eqref{eqn:ilp_agent_limit}, and  $\lambda_i \in \mathbb{R}_{\geq 0}$ is the dual variable corresponding to the $i^{th}$ constraint in \eqref{eqn:ilp_visit_all_targets}. Following \cite{feillet2010tutorial}, $\mathcal{F}_\text{price}$ can reduce the optimal cost of the RMP only if $\mathcal{F}_\text{price}$ contains a tour $\Gamma$ with negative \emph{reduced cost}, which is defined as follows:
\begin{align}
    \cstarred(\Gamma) = c^*(\Gamma) - \sum\limits_{i = 1}^{\ntar}\alpha(i, \Gamma)\lambda_i - \lambda_0.\label{eqn:reduced_cost_defn}
\end{align}
Thus the pricing problem seeks tours with negative reduced cost.\footnote{A tour $\Gamma$ already in $\mathcal{F}$ must have $\cstarred(\Gamma) \geq 0$, because otherwise $\lambda_0, \lambda_1, \dots, \lambda_\ntar$ would be dual infeasible for the RMP \cite{feillet2010tutorial}. However, in practice, due to floating-point arithmetic, we may compute a slightly negative $\cstarred(\Gamma)$. Thus, as in prior work \cite{kohl19992}, we seek tours with $\cstarred(\Gamma) < -\epsilon$ to avoid repeatedly adding tours to $\mathcal{F}$; in our implementation, $\epsilon = 10^{-4}$.}

Note that since the dual variables $\lambda_i \geq 0$ are subtracted in the reduced cost \eqref{eqn:reduced_cost_defn}, adding target-windows to a tour can decrease its reduced cost. This property is important in Section \ref{sec:solving_pricing_problem} for determining if a partial tour $\Gamma$ dominates another, $\Gamma'$. Specifically, our dominance criterion requires that any target-window appendable to $\Gamma'$ must also be appendable to $\Gamma$. This ensures that any decrease in reduced cost achievable by extending $\Gamma'$ is also achievable by extending $\Gamma$.


\subsection{BPRC Labeling Algorithm}\label{sec:solving_pricing_problem}
We solve the pricing problem using a labeling algorithm based on \cite{boland2006accelerated}, which forms the basis for state-of-the-art VRP pricing solvers. We first discuss the novel dominance criterion used in our labeling algorithm, then provide the algorithm itself.

For a partial tour $\Gamma$ to dominate another partial tour $\Gamma'$, two conditions must be met. First, any feasible extension of $\Gamma'$ (i.e., a sequence of target-windows that can be appended to it) must also be a feasible extension of $\Gamma$. Feasibility here means not violating capacity constraints or revisiting targets. Second, the minimum reduced cost over extensions of $\Gamma$ must be no greater than that from $\Gamma'$. \emph{Our primary contribution is a novel and computationally efficient method for verifying this second condition for the MT-VRP.}

We first explain the core idea before providing formal definitions. Assume the feasibility condition is met and that both $\Gamma$ and $\Gamma'$ end at the same target-window, $\gamma_{i,j}$. For an arbitrary $t' \in [\underline{t}_{i,j}, \overline{t}_{i,j}]$, let $\taua'$ be an optimal trajectory that executes $\Gamma'$ and intercepts $\gamma_{i,j}$ at space-time point $(q',t')$, where $q' = \tau_i(t')$. Suppose there exists a trajectory $\taua$ that executes $\Gamma$ and intercepts $\gamma_{i,j}$ at its starting point $(q, t)$. Let $\lambda = \sum\limits_{i = 1}^{\ntar}\alpha(i, \Gamma)\lambda_i + \lambda_0$, and $\lambda'$ be defined similarly for $\Gamma'$. Clearly, $\Gamma$ dominates $\Gamma'$ if for all $t'$, there exists a corresponding $\taua$ satisfying
\begin{align}
    c(\taua) + \|q'-q\|_2 -\lambda \leq c(\taua') - \lambda' \label{eq:red_cost_condition}.
\end{align}
This means that following $\taua$, then moving in a straight line from $q$ to $q'$, is no worse than following $\taua'$.
Checking this condition for every $t'$ is intractable, and thus we derive a sufficient condition for \eqref{eq:red_cost_condition} using easily computable bounds:

\begin{itemize}
    \item An \emph{upper bound} on $c(\taua) + \|q'-q\|_2$ is $c(\tau_\text{a,ub}) + \delta(\gamma_{i,j})$, where $\tau_\text{a,ub}$ is a trajectory that executes $\Gamma$ optimally under the constraint that $\tau_\text{a,ub}$ can only intercept target-windows at their start points, and $\delta(\gamma_{i,j})$ is the spatial length of $\gamma_{i,j}$. We call $\tau_\text{a,ub}$ an \emph{upper-bounding trajectory} for $\Gamma$. An example is shown in Fig. \ref{fig:segments_and_dominance_check} (a) and (c).
    \item A \emph{lower bound} on $c(\taua')$ is the cost of a trajectory $\tau_\text{a,lb}'$ that optimally executes $\Gamma'$, where $\tau_\text{a,lb}'$ is allowed to be discontinuous between its arrival and departure at any target-window. We call $\tau_\text{a,lb}'$ a \emph{lower-bounding trajectory} for $\Gamma'$. An example is shown in Fig. \ref{fig:segments_and_dominance_check} (b) and (c).
\end{itemize}
This yields the following sufficient condition for \eqref{eq:red_cost_condition}:
\begin{align}
    c(\tau_\text{a,ub}) + \delta(\gamma_{i,j}) -\lambda\leq c(\tau_\text{a,lb}') -\lambda'.\label{eqn:sufficient_red_cost_condition_full_window}
\end{align}
We constrain $\tau_\text{a,ub}$ to intercept target-windows at their start-points because this enables efficient computation of $c(\tau_\text{a,ub})$, as described in the subsequent section on the labeling algorithm. Otherwise, computing $c(\tau_\text{a,ub})$ would require solving an SOCP; we show in Section \ref{sec:numerical_results} that doing so for the vast number of partial tours in the pricing problem is computationally prohibitive. We relax continuity when computing $\tau_\text{a,lb}'$ similarly to avoid solving an SOCP.

Next, note that condition \eqref{eqn:sufficient_red_cost_condition_full_window} can be weak if the time window of $\gamma_{i,j}$ is large, thus limiting its ability to prune partial tours. To strengthen \eqref{eqn:sufficient_red_cost_condition_full_window}, we divide each target-window into segments and apply \eqref{eqn:sufficient_red_cost_condition_full_window} to each segment, as shown in Fig. \ref{fig:segments_and_dominance_check} (d). The new labels in BPRC are designed to store this extra information--namely, the upper and lower cost bounds and the accumulated dual variables for each partial tour. We now formally define the labels and the algorithm.



\subsubsection{BPRC Labels:}
We divide each target-window $\gamma_{i,j}$ into \emph{segments} $\xi_{i,j,1}, \dots, \xi_{i,j,n_\text{seg}(\gamma_{i,j})}$, where $\xi_{i,j,k} = (\gamma_{i,j}, [\underline{t}_{i,j,k}, \overline{t}_{i,j,k}])$ is the $k$th segment for $\gamma_{i,j}$, and $n_\text{seg}(\gamma_{i,j})$ is the number of segments for $\gamma_{i,j}$. The depot only gets one segment, i.e. $n_\text{seg}(\gamma_0) = 1$. We determine $n_\text{seg}(\gamma_{i,j})$ for $\gamma_{i,j} \neq \gamma_0$ via a user-specified parameter $n_\text{seg,tar}$, which is the number of segments per target. We allocate segments to target-windows so that each target-window gets at least one segment, and target-windows with longer time windows get more segments. Appendix \ref{sec:segment_allocation} gives the allocation formula.

Let $\delta(\xi_{i,j,k})$ be the spatial length of $\xi_{i,j,k}$, and let $s_{i,j,k} = (\tau_{i}(\underline{t}_{i,j,k}), \underline{t}_{i,j,k})$ be the starting point of $\xi_{i,j,k}$. We say an agent trajectory $\taua$ \emph{intercepts} segment $\xi_{i,j,k}$ if $\taua$ intercepts target $i$ in the time window of $\xi_{i,j,k}$.

\begin{figure}
    \centering
    \includegraphics[width=0.47\textwidth]{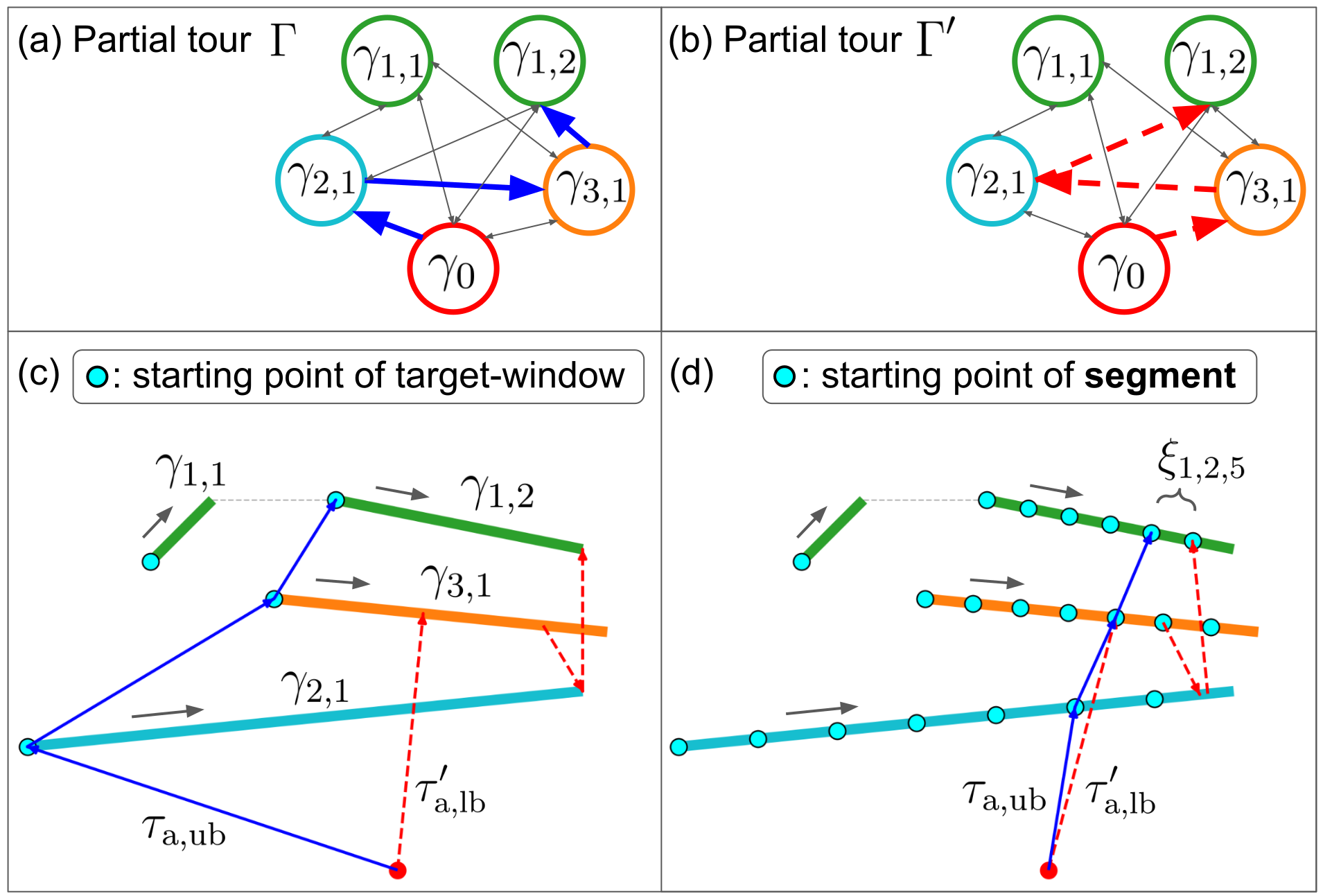}
    \caption{(a) and (b) show two partial tours $\Gamma$ and $\Gamma'$ ending with $\gamma_{1,2}$. We want to check if $\Gamma$ dominates $\Gamma'$. (c) We check dominance by comparing an upper-bounding trajectory $\tau_\text{a,ub}$ for $\Gamma$ and a lower-bounding trajectory $\tau_\text{a,lb}'$ for $\Gamma'$, as described in Section \ref{sec:solving_pricing_problem}. (d) We strengthen our dominance check by dividing $\gamma_{1,2}$ into segments and applying the check from (c) for each segment; an example is shown for the 5th segment of $\gamma_{1,2}$, denoted as $\xi_{1,2,5}$.}
    \label{fig:segments_and_dominance_check}
\end{figure}

Within the labeling algorithm, we represent a partial tour $\Gamma$ as a \emph{label}, denoted as $\text{Label}(\Gamma) = (\gamma_{i,j}, t, \sigma, \vec{b}, \vec{g}_\text{ub}, \vec{g}_\text{lb}, \lambda)$, whose elements we define below. Note that $\gamma_{i,j}, t, \sigma$, and $\vec{b}$ are elements typically stored in VRP labeling algorithms \cite{costa2019exact}, while $\vec{g}_\text{ub}, \vec{g}_\text{lb}$ and $\lambda$ are specific to our method for the MT-VRP. In particular,
\begin{itemize}
    \item $\gamma_{i,j}$ is the final target-window in $\Gamma$
    \item $t$ is the minimum time required to execute $\Gamma$
    \item $\sigma$ is the sum of demands of targets visited by $\Gamma$
    \item $\vec{b}$ is a binary vector with length $\ntar$, where $\vec{b}[i'] = 1$ if and only if $\Gamma$ visits target $i'$, $\sigma + d_{i'} > d_\text{max}$, or $t > \max\limits_{j' \in \{1, 2, \dots, n_\text{win}(i')\}}\hspace{-0.1cm}\LFDT(\gamma_{i,j}, \gamma_{i',j'})$. The last two conditions mean target $i'$ cannot be intercepted after intercepting $\gamma_{i,j}$ at time $t$, due to capacity or speed constraints.\footnote{Including both visitation and reachability status of targets in a label is a well-known strategy for strengthening dominance checks in VRP variants, introduced by \cite{feillet2004exact}.}
    \item $\vec{g}_\text{ub}$ is a vector of length $n_\text{seg}(\gamma_{i,j})$, where $\vec{g}_\text{ub}[k]$ is the cost of an upper-bounding trajectory for $\Gamma$ intercepting $\xi_{i,j,k}$ at its start point.
    \item $\vec{g}_\text{lb}$ is a vector of length $n_\text{seg}(\gamma_{i,j})$, where $\vec{g}_\text{lb}[k]$ is the cost of a lower-bounding trajectory for $\Gamma$ intercepting $\xi_{i,j,k}$.
    \item $\lambda = \sum\limits_{i = 1}^{\ntar}\alpha(i, \Gamma)\lambda_i + \lambda_0$.
\end{itemize}
Each label $l$ also stores a backpointer to a parent label. By traversing backpointers recursively, we can obtain the partial tour $\Gamma$ corresponding to $l$, denoted as \PartialTour($l$).

\subsubsection{BPRC Dominance Criteria:} Consider two labels $l = (\gamma_{i,j}, t, \sigma, \vec{b}, \vec{g}_\text{ub}, \vec{g}_\text{lb}, \lambda)$ and $l' = (\gamma_{i,j}, t', \sigma', \vec{b}', \pvec{g}_\text{ub}', \pvec{g}_\text{lb}', \lambda')$, both at target-window $\gamma_{i,j}$, and let $\Gamma = \PartialTour(l)$ and $\Gamma' = \PartialTour(l')$. We now define dominance so that if $l$ dominates $l'$, $\PartialTour(l)$ dominates $\PartialTour(l')$.   
\begin{itemize}
\item General case when $\gamma_{i,j} \neq \gamma_0$: $l$ dominates $l'$ if all the following conditions hold:
{\small
\begin{align}
    \sigma &\leq \sigma'\label{eqn:label_dominance_capacity}\\
    \vec{b}[i'] &\leq \vec{b}'[i'], \; ~\forall i' \in\{1,\dots,\ntar\}\label{eqn:label_dominance_visitation}\\
    \hspace{-0.1cm}\vec{g}_\text{ub}[k] + \delta(\xi_{i,j,k}) - \lambda &\leq \pvec{g}_\text{lb}'[k] - \lambda', \; \forall k \in \{1,\dots,n_\text{seg}(\gamma_{i,j})\}\label{eqn:label_dominance_cost}
\end{align}
}
\item Special case when $\gamma_{i,j} = \gamma_0$: $l$ dominates $l'$ if $\cstarred(\Gamma) \leq \cstarred(\Gamma')$.
\end{itemize}
\eqref{eqn:label_dominance_capacity}-\eqref{eqn:label_dominance_visitation} ensure that any target-windows appendable to $\Gamma$ are appendable to $\Gamma'$, and \eqref{eqn:label_dominance_cost} is a generalization of condition \eqref{eqn:sufficient_red_cost_condition_full_window}.

\subsubsection{BPRC Labeling Algorithm:} The algorithm (Alg. \ref{alg:label_setting}) first initializes a priority queue, where a label $l = (\gamma, t, \sigma, \vec{b}, \vec{g}_\text{ub}, \vec{g}_\text{lb}, \lambda)$ has higher priority than label $l' = (\gamma', t', \sigma', \vec{b}', \pvec{g}_\text{ub}', \pvec{g}_\text{lb}', \lambda')$ if $(t, \sigma, \min(\vec{g}_\text{lb}) - \lambda)$ is lexicographically smaller than $(t', \sigma', \min(\pvec{g}_\text{lb}') - \lambda')$. The priority queue is initialized with the label whose partial tour contains only the depot. Alg. \ref{alg:label_setting} then initializes a set $\mathfrak{D}[\gamma]$ of nondominated labels at each target-window $\gamma$ (Lines \ref{algline:start_init_labels}-\ref{algline:done_init_labels}). Alg. \ref{alg:label_setting}'s main loop performs a best-first search. Each iteration pops the highest-priority label $l = (\gamma_{i,j}, t, \sigma, \vec{b}, \vec{g}_\text{ub}, \vec{g}_\text{lb}, \lambda)$ from the priority queue, then iterates over the \emph{successor target-windows} of $l$, i.e. each target-window $\gamma_{i',j'}$ that satisfies
\begin{enumerate}
    \item $(\gamma_{i,j}, \gamma_{i',j'}) \in \mathcal{E}_\text{tw} \setminus \mathcal{B}$, preventing traversal of edges in $\mathcal{B}$

    \item $t \leq \LFDT(\gamma_{i,j}, \gamma_{i',j'})$, so intercepting $\gamma_{i',j'}$ after intercepting $\gamma_{i,j}$ at time $t$ does not violate the speed limit

    \item $\sigma + d_{i'} \leq d_\text{max}$, enforcing capacity constraints

    \item If $i' \neq 0$, then $\vec{b}[i'] = 0$, ensuring no targets are revisited
\end{enumerate}
For each successor target-window $\gamma_{i',j'}$, we create a successor label $l' = (\gamma_{i',j'}, t', \sigma', \vec{b}', \pvec{g}_\text{ub}', \pvec{g}_\text{lb}', \lambda')$ as follows:
\begin{itemize}
    \item $t'$ is the \emph{earliest feasible arrival time} from space-time point $(\tau_{i}(t), t)$ to $\gamma_{i',j'}$, denoted as $\EFAT(\tau_{i}(t), t, \gamma_{i',j'})$, which is the minimum time at which a speed-admissible trajectory can intercept $\gamma_{i',j'}$ after departing $(\tau_{i}(t), t)$. \cite{philip2025C} describes how to compute $t'$.
    \item $\sigma' = \sigma + d_{i'}$
    \item $\vec{b}'$ is identical to $\vec{b}$, except for the following modifications. First, if $i' \neq 0$, we set $\vec{b}'[i'] = 1$. Then, we set $\vec{b'}[i''] = 1$ for all targets $i''$ such that $\sigma' + d_{i''} > d_\text{max}$ or $t' > \max\limits_{j'' \in \{1, 2, \dots, n_\text{win}(i'')\}}\LFDT(\gamma_{i',j'}, \gamma_{i''j''})$.
    \item $\lambda' = \lambda$, if $i' = 0$, and $\lambda' = \lambda + \lambda_{i'}$ otherwise
    \item For each $k' \in \{1, 2, \dots, n_\text{seg}(\gamma_{i',j'})\}$, we compute $\pvec{g}'_\text{ub}[k']$ via Bellman's optimality principle:
    \begin{align}
        \hspace{-0.6cm}\pvec{g}'_\text{ub}[k'] = \hspace{-0.3cm}\min\limits_{k \in \{1, 2, \dots, n_\text{seg}(\gamma_{i,j})\}} \vec{g}_\text{ub}[k] + c_\text{start}(s_{i,j,k}, s_{i',j',k'})\label{eqn:segment_start_bellman_update}\hspace{-0.25cm}
    \end{align}
    where $c_\text{start}(s_{i,j,k}, s_{i',j',k'})$ is the Euclidean distance between the positions of $s_{i,j,k}$ and $s_{i',j',k'}$, if a speed-admissible trajectory exists from $s_{i,j,k}$ to $s_{i',j',k'}$, and $\infty$ otherwise. We compute $c_\text{start}$ for all pairs $(s_{i,j,k}, s_{i',j',k'})$ with $i \neq i'$ at the beginning of the BPRC algorithm.
    \item To compute each $\pvec{g}_\text{lb}'[k']$, we first check if $t' > \overline{t}_{i',j',k'}$; if so, $\pvec{g}_\text{lb}'[k'] = \infty$, since it is impossible for an agent to intercept $\xi_{i',j',k'}$ by executing $\PartialTour(l')$. If $t' \leq \overline{t}_{i',j',k'}$, we apply Bellman's principle, similarly to \eqref{eqn:segment_start_bellman_update}:
    \begin{align}
        \hspace{-0.6cm}\pvec{g}_\text{lb}'[k'] &= \hspace{-0.3cm}\min\limits_{k \in \{1, 2, \dots, n_\text{seg}(\gamma_{i,j})\}}\vec{g}_\text{lb}[k] + c_\text{seg}(\xi_{i,j,k}, \xi_{i',j',k'})\label{eqn:segment_bellman_update}\hspace{-0.25cm}
    \end{align}
    where $c_\text{seg}(\xi_{i,j,k}, \xi_{i',j',k'})$ is the minimum distance an agent must travel between an interception of $\xi_{i,j,k}$ and an interception of $\xi_{i',j',k'}$, where $\xi_{i',j',k'}$ is intercepted after $\xi_{i,j,k}$. We compute $c_\text{seg}$ for all pairs $(\xi_{i,j,k}, \xi_{i',j',k'})$ with $i \neq i'$ at the beginning of the BPRC algorithm, using the SOCP in Appendix \ref{sec:segment_socp}.
\end{itemize}
After generating the successor label $l'$, if $\gamma_{i',j'} = \gamma_0$ (i.e. $l'$ represents a tour $\Gamma'$), we perform additional pruning checks (Lines \ref{algline:check_prune_depot_label}-\ref{algline:prune_depot_label}). First, if any label $l''$ at $\gamma_0$ satisfies $\cstarred(\PartialTour(l'')) \leq \pvec{g}_\text{lb}'[1] - \lambda'$, we prune $l'$. This is because $\pvec{g}_\text{lb}'[1] - \lambda'$ is a lower bound on $\cstarred(\Gamma')$, and if this lower bound is not smaller than the reduced cost of a previously found tour, then $\cstarred(\Gamma')$ cannot be smaller either. Next, if $\pvec{g}_\text{lb}'[1] - \lambda' \geq -\epsilon$, we prune $l'$, since this means $\cstarred(\Gamma')$ is no smaller than $-\epsilon$. If both of these checks fail, we compute $\cstarred(\Gamma')$ via an SOCP and prune $l'$ if $\cstarred(\Gamma') \geq -\epsilon$. We perform the first two checks before the third to avoid the computational expense of solving an SOCP, if possible.

Next, if $l'$ has not yet been pruned, we check if $l'$ is dominated by any existing labels at $\gamma_{i',j'}$, and prune $l'$ if so (Line \ref{algline:prune_dominated_successor}). If $l'$ is not dominated, we prune labels stored at $\gamma_{i',j'}$ dominated by $l'$ and add $l'$ to $\mathfrak{D}[\gamma_{i',j'}]$ (Line \ref{algline:add_successor_to_nondominated_labels}). Then, if $\gamma_{i',j'} \neq \gamma_0$, we push $l'$ onto the priority queue, and if $\gamma_{i',j'} = \gamma_0$, we extract the tour corresponding to $l'$ and add it to the set $\mathcal{F}_\text{price}$ returned upon termination (Lines \ref{algline:push_onto_pqueue}-\ref{algline:add_to_Fprice}).

To speed up BPRC, we apply a heuristic from \cite{ozbaygin2017branch}, where each $\mathfrak{D}[\gamma_{i,j}]$ only stores the label at $\gamma_{i,j}$ with the smallest $\min(\vec{g}_\text{lb}) - \lambda$ so far. This modification may prevent finding tours with negative reduced cost. Thus, if this modified Alg. \ref{alg:label_setting} returns $\emptyset$, we run the unmodified Alg. \ref{alg:label_setting}.

\begin{algorithm}
\caption{SolvePricingProblem($\lambda, \mathcal{B}$)}\label{alg:label_setting}
\begin{algorithmic}[1]
\State PQUEUE = [$(\gamma_0, 0, 0, \underbrace{(0, 0, \dots, 0)}_{\ntar \text{ times}}, (0), (0), \lambda_0)$]\label{algline:label_setting_initialize_pqueue}
\State $\mathfrak{D}$ = dict()\label{algline:start_init_labels}
\For{$\gamma \in \mathcal{V}_\text{tw}$}
    $\mathfrak{D}[\gamma] = \{\}$\label{algline:done_init_labels}
\EndFor
\State $\mathcal{F}_\text{price} = \emptyset$
\While{\upshape PQUEUE is not empty}
    \State $l = $ PQUEUE.pop()
    \For{each $\gamma_{i',j'} \in \text{SuccessorTargetWindows}(l, \mathcal{B})$}\label{algline:pricing_iterate_over_successors}
        \State $l' = (\gamma_{i',j'}, t', \sigma', \vec{b}', \pvec{g}_\text{ub}', \pvec{g}_\text{lb}', \lambda') = \text{SuccessorLabel}(l, \gamma_{i',j'})$\label{algline:successor_label}
        \If{$\gamma_{i',j'} = \gamma_0$ AND CheckDepotPrune($l'$)}\label{algline:check_prune_depot_label}
            \State continue\label{algline:prune_depot_label}
        \EndIf
        \If{any $l'' \in \mathfrak{D}[\gamma_{i',j'}]$ dominates $l'$}\label{algline:check_successor_dominated}
            continue\label{algline:prune_dominated_successor}
        \EndIf
        \State Delete all $l'' \in \mathfrak{D}[\gamma_{i',j'}]$ dominated by $l'$\label{algline:prune_existing_labels}
        \State $\mathfrak{D}[\gamma_{i',j'}]$.add($l'$)\label{algline:add_successor_to_nondominated_labels}
        \If{$\gamma_{i',j'} \neq \gamma_0$}
            PQUEUE.push($l'$)\label{algline:push_onto_pqueue}
        \Else \;
            $\mathcal{F}_\text{price} = \mathcal{F}_\text{price} \cup \{\PartialTour(l')\}$\label{algline:add_to_Fprice}
        \EndIf
    \EndFor
\EndWhile

\State return $\mathcal{F}_\text{price}$
\end{algorithmic}
\end{algorithm}

\subsection{Generating Additional Feasible Solutions}\label{sec:gen_feas_tours}
To generate the initial incumbent and additional feasible solutions when the RMP is infeasible, we extend the feasible solution generation method from \cite{bhat2024AComplete}, which applies to the single-agent MT-TSP, to handle multiple agents and capacity constraints. In particular, \cite{bhat2024AComplete} expands partial tours from a stack in depth-first fashion. We instead expand sets of partial tours from a stack, where each set contains up to one partial tour per agent. Appendix \ref{sec:gen_feas_tours_full} fully describes the algorithm.

\subsection{Theoretical Analysis}
Branch-and-price finds an optimal solution to an ILP of type \eqref{optprob:mt_cvrp_ilp} if the dominance checks are correct, and the feasible solution generation method (Section \ref{sec:gen_feas_tours}) is complete.\footnote{Complete algorithms generate a feasible solution if one exists and report infeasible in finite time otherwise \cite{choset2005principles}.} We prove our dominance checks' correctness here; proofs of intermediate lemmas and completeness are in Appendix \ref{sec:additional_proofs}.
\begin{theorem}\label{thm:label_dominance_means_ptour_dominance}
Let $l = (\gamma_{i,j}, t, \sigma, \vec{b}, \vec{g}_\textnormal{ub}, \vec{g}_\textnormal{lb}, \lambda)$ and $l' = (\gamma_{i,j}, t', \sigma', \vec{b}', \pvec{g}_\textnormal{ub}', \pvec{g}_\textnormal{lb}', \lambda')$. If \eqref{eqn:label_dominance_capacity}-\eqref{eqn:label_dominance_cost} hold, then $\Gamma = \PartialTour(l)$ dominates $\Gamma' = \PartialTour(l')$.
\end{theorem}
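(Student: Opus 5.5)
We must show two things: every feasible extension of $\Gamma'$ is a feasible extension of $\Gamma$, and for every such extension $\Lambda$ we have $\cstarred(\Gamma\oplus\Lambda)\le\cstarred(\Gamma'\oplus\Lambda)$, where $\oplus$ denotes concatenation.

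\textbf{Feasibility.} Let $\Lambda$ be a sequence of target-windows that can be appended to $\Gamma'$ without revisiting targets, exceeding capacity, or violating speed constraints. By \eqref{eqn:label_dominance_visitation}, every target with $\vec{b}'[i']=0$ also has $\vec{b}[i']=0$. So every target that is unvisited and still reachable after $\Gamma'$ is also unvisited after $\Gamma$. By \eqref{eqn:label_dominance_capacity}, the load after appending $\Lambda$ to $\Gamma$ is $\sigma+\sum_{\Lambda}d\le\sigma'+\sum_{\Lambda}d\le d_\text{max}$. Speed feasibility of $\Gamma\oplus\Lambda$ follows from the trajectory we construct in the cost step below, so the two parts are proved together.

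\textbf{Cost: setup.} Fix an extension $\Lambda$ and an optimal trajectory $\bar\tau'$ executing $\Gamma'\oplus\Lambda$. Let $t^\ast$ be the time at which $\bar\tau'$ intercepts the final target-window $\gamma_{i,j}$ of $\Gamma'$, and let $k$ be a segment index with $t^\ast\in[\underline t_{i,j,k},\overline t_{i,j,k}]$. Split $\bar\tau'$ at $t^\ast$ into a prefix $\taua'$ executing $\Gamma'$ and ending at $\xi_{i,j,k}$, and a suffix $\tau_\text{suf}$ starting at $(\tau_i(t^\ast),t^\ast)$.

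\textbf{Lemma 1 (lower bound, by induction on length).} For any trajectory executing $\Gamma'$ and intercepting $\xi_{i,j,k}$, its cost is at least $\pvec{g}_\text{lb}'[k]$. To prove this, decompose the trajectory at its interceptions of consecutive target-windows. Each piece travels between some segment of one target-window and some segment of the next, so its length is at least $c_\text{seg}$ by definition. Applying the recursion \eqref{eqn:segment_bellman_update} gives the bound. Setting $\pvec{g}_\text{lb}'[k]=\infty$ whenever $t'>\overline t_{i,j,k}$ is consistent, because $t'$ is a true earliest arrival time. Consequently $c(\taua')\ge\pvec{g}_\text{lb}'[k]$.

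\textbf{Lemma 2 (upper bound).} There is a speed-admissible trajectory $\taua$ executing $\Gamma$ that intercepts $\gamma_{i,j}$ exactly at $s_{i,j,k}$ with $c(\taua)=\vec{g}_\text{ub}[k]$ (or $\vec g_\text{ub}[k]=\infty$). To build it, concatenate the straight-line moves between segment start points selected by the minimizers in \eqref{eqn:segment_start_bellman_update}, waiting wherever needed. Each move is speed-admissible because its $c_\text{start}$ is finite. If $\vec{g}_\text{ub}[k]=\infty$, then \eqref{eqn:label_dominance_cost} forces $\pvec{g}_\text{lb}'[k]=\infty$. By Lemma 1 no trajectory executing $\Gamma'$ reaches segment $k$, which contradicts the choice of $k$, so this case cannot occur.

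\textbf{Cost: the main step, which I expect to be the obstacle.} Build a trajectory for $\Gamma\oplus\Lambda$ that follows $\taua$ to $s_{i,j,k}$ at time $\underline t_{i,j,k}\le t^\ast$. It then moves in a straight line from $\tau_i(\underline t_{i,j,k})$ to $\tau_i(t^\ast)$, arriving by $t^\ast$, and finally follows $\tau_\text{suf}$.

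The straight segment has length at most $\delta(\xi_{i,j,k})$. It is speed-admissible because the target moves at most $v_\text{max}$, so that distance can be covered within the elapsed time $t^\ast-\underline t_{i,j,k}$; the simplest choice is to ride along with the target.

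The interception of $\gamma_{i,j}$ is claimed at $s_{i,j,k}$. The claims made by $\tau_\text{suf}$ are unchanged, since the claiming notion lets us designate interceptions explicitly. This settles speed feasibility of the extension. If $\gamma_{i,j}=\gamma_0$, then $\Lambda$ is empty and the special-case rule applies directly.

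\textbf{Cost: conclusion.} Writing $\lambda_\Lambda$ for the dual sum over the new targets in $\Lambda$,
\begin{align*}
\cstarred(\Gamma\oplus\Lambda)&\le \vec g_\text{ub}[k]+\delta(\xi_{i,j,k})+c(\tau_\text{suf})-\lambda-\lambda_\Lambda\\
&\le \pvec g'_\text{lb}[k]+c(\tau_\text{suf})-\lambda'-\lambda_\Lambda\\
&\le c(\taua')+c(\tau_\text{suf})-\lambda'-\lambda_\Lambda=\cstarred(\Gamma'\oplus\Lambda).
\end{align*}
The first inequality is the cost of the constructed trajectory. The second is \eqref{eqn:label_dominance_cost}, and the third is Lemma 1. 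Both extensions add the same $\lambda_\Lambda$ because neither revisits a target. Hence $\Gamma$ dominates $\Gamma'$.
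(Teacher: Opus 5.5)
Your proposal is correct and follows essentially the same route as the paper. You split an optimal trajectory for the extended tour at $\gamma_{i,j}$ and bound the $\Gamma'$ prefix below by $\pvec{g}_\text{lb}'[k]$ (the paper's Lemmas~\ref{lemma:min_execution_time} and~\ref{lemma:segment_graph_lower_bound}). You then replace that prefix by the start-point upper-bounding trajectory for $\Gamma$ plus a ride along $\tau_i$ within $\xi_{i,j,k}$ (Lemmas~\ref{lemma:segment_start_graph_upper_bound_start}, \ref{lemma:segment_start_graph_upper_bound}, and~\ref{lemma:cost_dominance_lemma}), concatenate with the unchanged suffix, and use \eqref{eqn:label_dominance_capacity}--\eqref{eqn:label_dominance_visitation} for feasibility. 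The only differences are cosmetic and harmless: you prove the inequality for every extension rather than only for the least-reduced-cost tour $T'$, and you claim the interception of $\gamma_{i,j}$ at the segment start point rather than at the split time.
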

\begin{proof}
Let $T'$ be a least-reduced-cost tour beginning with $\Gamma'$, and let $\lambda_\text{post} = \sum\limits_{i = 1}^{\ntar}\alpha(i, T')\lambda_i - \lambda_0 - \lambda'$. Let $\taua'$ be a least-cost, speed-admissible trajectory executing $T'$. Let $\tau_{\text{a,pre}}'$ be the portion of $\taua'$ ending at $\gamma_{i,j}$, and let $\tau_\text{a,post}'$ be the remaining portion. In Lemma \ref{lemma:cost_dominance_lemma} in Appendix \ref{sec:additional_proofs}, we prove that \eqref{eqn:label_dominance_cost} and the speed-admissibility of $\tau_\text{a,pre}'$ ensure that there is a trajectory $\tau_\text{a,pre}$ executing $\Gamma$ satisfying
\begin{align}
    c(\tau_\text{a,pre}) - \lambda \leq c(\tau_\text{a,pre}') - \lambda'.\label{eqn:taua_pre_vs_taua_pre_prime}
\end{align}
Let $\taua$ be the concatenation of $\tau_\text{a,pre}$ and $\tau_\text{a,post}'$. Adding $c(\tau_\text{a,post}')$ to both sides of \eqref{eqn:taua_pre_vs_taua_pre_prime}, we have
\begin{align}
    c(\tau_\text{a}) - \lambda \leq c(\tau_\text{a}') - \lambda'.\label{eqn:taua_vs_taua_prime}
\end{align}
Let $T'_\text{post}$ be the portion of $T'$ from $T'[\Length(\Gamma') + 1]$ onward, and let $\sigma_\text{post}'$ be its depleted capacity. Let $T$ be the concatenation of $\Gamma$ with $T'_\text{post}$. Since $T'$ is a tour, $T'_\text{post}$ does not revisit targets visited by $\Gamma'$. Combining this with \eqref{eqn:label_dominance_visitation}, $T'_\text{post}$ does not revisit targets visited by $\Gamma$ either, and thus $T$ does not revisit targets. Also, since $T'$ is a tour, its depleted capacity $\sigma' + \sigma_\text{post}'$ is no larger than $d_\text{max}$; combining this with \eqref{eqn:label_dominance_capacity}, the depleted capacity of $T$, i.e. $\sigma + \sigma_\text{post}'$, is no larger than $d_\text{max}$. Thus, since $T$ revisits no targets, satisfies capacity constraints, and begins and ends at $\gamma_0$, $T$ is a tour. 

Also, $\taua$ executes $T$, so $c^*(T) \leq c(\taua)$. This means
\begin{align}
    \cstarred(T) = c^*(T) - \lambda - \lambda_\text{post} &\leq c(\taua) - \lambda - \lambda_\text{post}\\
    &\leq c(\taua') - \lambda' - \lambda_\text{post}\label{eqn:cstarred_T_leq_c_taua_prime_minus_lambdas}\\
    &= \cstarred(T')
\end{align}
where \eqref{eqn:cstarred_T_leq_c_taua_prime_minus_lambdas} follows from \eqref{eqn:taua_vs_taua_prime}. Thus we have a tour $T$ beginning with $\Gamma$ with $\cstarred(T) \leq \cstarred(T')$, so $\Gamma$ dominates $\Gamma'$.
\end{proof}

\section{Numerical Results}\label{sec:numerical_results}
We ran experiments using an Intel i9-9820X 3.3GHz CPU with 128 GB RAM, with 10 cores and hyperthreading disabled. Our baseline, called \emph{Compact MICP}, is the MICP \cite{philip2025Mixed}, with additional decision variables and constraints based on \cite{desrochers1987vehicle} to enforce capacity constraints. In particular, we implemented (17)-(29) from \cite{philip2025Mixed}, then added an additional decision variable $\sigma_{i,j} \in [0, d_\text{max} - d_{i}]$ for each target-window $\gamma_{i,j}$, equal to the depleted capacity immediately before visiting $\gamma_{i,j}$, if $\gamma_{i,j}$ is visited. We then added a constraint 
{\small
\begin{align}
    \sigma_{i,j} + d_{i} \leq \sigma_{i',j'} + (d_\text{max} + d_i)(1 - y_{e}) \; \forall e = (\gamma_{i,j}, \gamma_{i',j'}) \in \mathcal{E}_\text{tw}
\end{align}
}
where $y_{e}$ is a binary decision variable from \cite{philip2025Mixed} which equals 1 if edge $e$ is traversed and 0 otherwise. We also implemented two ablations of BPRC, called \emph{BPRC-ablate-dominance} and \emph{BPRC-ablate-bounds}. BPRC-ablate-dominance is meant to show the benefit of our dominance checks at target-windows $\gamma \neq \gamma_0$ by removing these checks. In particular, BPRC-ablate-dominance does not store nondominated labels at any $\gamma \neq \gamma_0$ and does not store $\vec{g}_\text{ub}$ in its labels.\footnote{BPRC-ablate-dominance still stores $\vec{g}_\text{lb}$ to break ties on the priority queue and prune labels at the depot.} BPRC-ablate-bounds is meant to show the benefits of our upper and lower-bounding methods for computing $\vec{g}_\text{ub}$ and $\vec{g}_\text{lb}$, respectively. In particular, BPRC-ablate-bounds computes $\vec{g}_\text{ub}$ and $\vec{g}_\text{lb}$ exactly using SOCPs. We solved Compact MICP and all SOCPs with Gurobi 12.03.

We generated problem instances by extending the instance generation method from \cite{bhat2024AComplete} to multiple agents.
In all instances, each target had two time windows and a demand of 1. Then we varied the number of targets, the capacity, the number of agents, and the time window lengths in Experiments 1 to 4.
In these experiments, we set the number of segments per target, i.e. $n_\text{seg,tar}$, to 32 for BPRC, 4 for BPRC-ablate-bounds, and 8 for BPRC-ablate-dominance.\footnote{We tuned $n_\text{seg,tar}$ for BPRC to achieve the best median runtime over 10 instances with 20 targets, separate from the instances in the presented experiments. The ablations' median runtimes were equal to the time limit for each tested $n_\text{seg,tar}$ for these tuning instances, so we instead tuned the ablations on 10 instances with 15 targets.}
We varied $n_\text{seg,tar}$ in Experiment 5. We limited each planner's computation time to 10 min per instance. We also set a 100 GiB memory limit, which was only relevant for BPRC-ablate-dominance; the maximum measured memory usages for Compact MICP, BPRC-ablate-bounds, and BPRC were 1.41 GiB, 0.345 GiB, and 1.28 GiB, respectively. When we give runtime statistics (e.g. median) for BPRC-ablate-dominance, we only take statistics over instances where it did not reach the memory limit.

\begin{figure}
    \centering
    \includegraphics[width=0.47\textwidth]{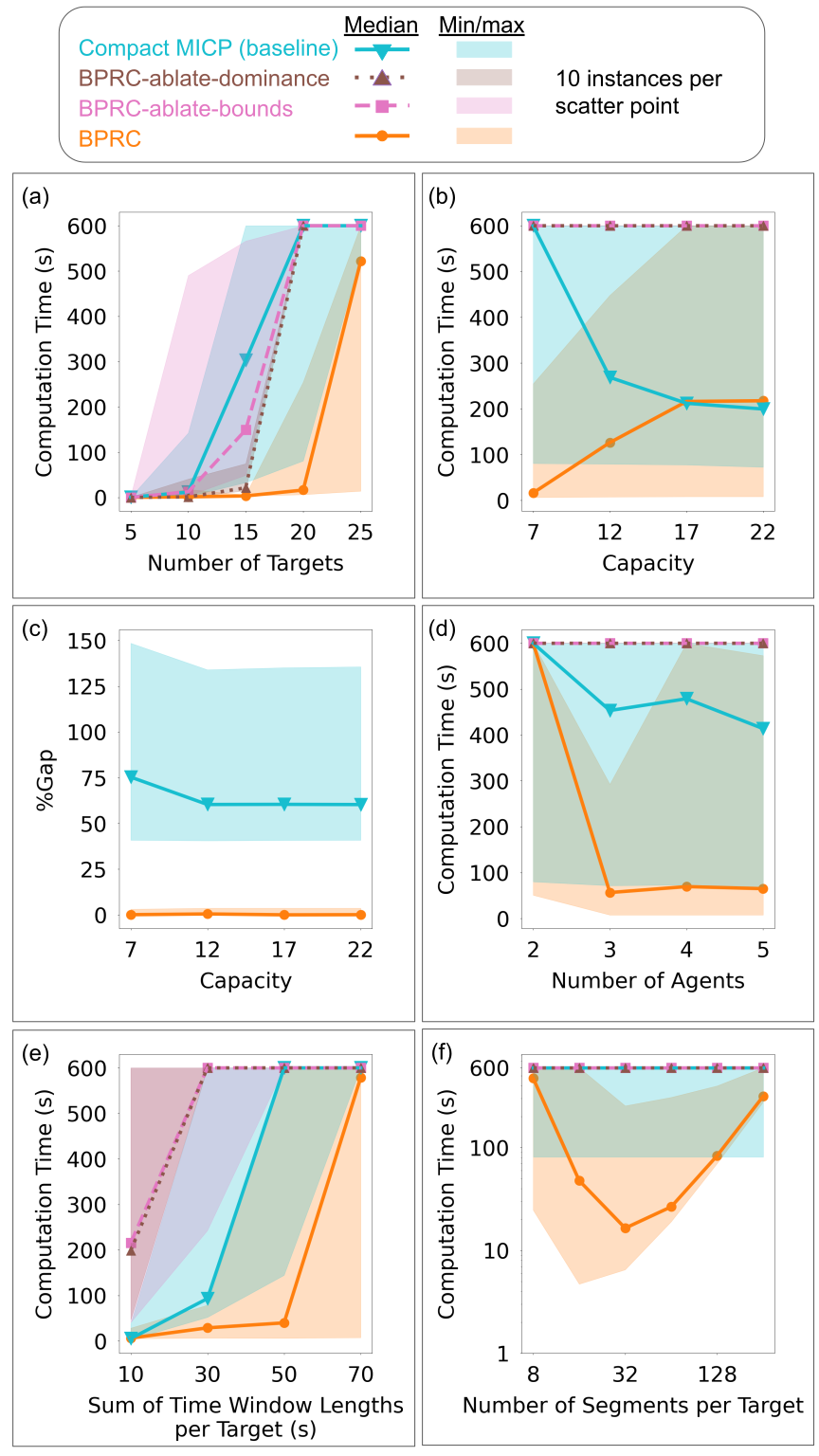}
    \caption{(a) Varying the number of targets. The gap in computation time between BPRC and the baseline and ablations widens as we increase the number of targets up to 20. The gap shrinks at 25 targets because BPRC reaches the time limit in 5 instances. We do not show a computation time for BPRC-ablate-dominance for 25 targets because it reached the memory limit in every instance. (b) Varying the capacity. BPRC shows advantage for small capacities. (c) \%Gap (as defined in Experiment 2) of each method's relaxation solution. BPRC's relaxation has a smaller \%Gap, and is thus tighter, in every instance. (d) Varying the number of agents. BPRC shows advantage when the number of agents is 3 or larger. (e) Varying the time window lengths. BPRC's performance advantage widens as we increase time window length up to 50, then shrinks because BPRC reaches the time limit in many instances. (f) Varying the number of segments in BPRC. Horizontal and vertical axes are log-scaled. MICP's min, median, and max are shown as flat lines without scatter points because the quantity being varied is specific to BPRC. We only show MICP's computation time to indicate that BPRC has smaller median computation time for a wide range of parameter values.}
    \label{fig:plots}
\end{figure}

\subsubsection{Experiment 1 - Varying the Number of Targets:}\label{sec:vary_num_targets}
We varied $\ntar$ from 5 to 25. Each instance had 3 agents with capacity $\lceil n_\text{tar}/\nagt\rceil$, i.e. this experiment considers instances where capacity is severely limited. Experiment 2 considers larger capacities. The sum of time window lengths per target was 50. The results are in Fig. \ref{fig:plots} (a). For 15 to 20 targets, BPRC's median runtime is more than an order of magnitude smaller than Compact MICP's. BPRC has smaller runtime than the ablations as well, indicating that our dominance checks, and our cheap methods of evaluating them, are beneficial. For 25 targets, BPRC reaches the time limit in 5 instances and thus has similar median runtime to the baseline and ablations. For 25 targets, pricing was the bottleneck, taking more than 80\% of the runtime in 8 of the 10 instances.

\subsubsection{Experiment 2 - Varying the Capacity:}\label{sec:vary_capacity}
We varied the capacity from 7 to 22, fixing the number of targets to 20, the number of agents to 3, and the sum of time window lengths per target to 50. The results are in Fig. \ref{fig:plots} (b). BPRC has smaller median runtimes than Compact MICP for small capacities, and slightly larger median runtimes for large capacities. The ablations perform poorly for all capacities.

As stated in Section \ref{sec:cvrp_related_work}, one reason that branch-and-price is favored over compact formulations for VRPs is that the relaxation of branch-and-price's integer program tends to be tighter than the relaxations of compact formulations. To validate that this holds in the MT-VRP, for each instance in Experiment 2, for both BPRC and Compact MICP, we computed a lower bound $c_\text{LB}$ on the optimal cost by solving the convex relaxation of the method's integer or mixed-integer program.\footnote{For BPRC, $c_\text{LB}$ is the optimal cost of LP-$\mathcal{B}$ at the root branch-and-bound node, i.e. computing it does not require enumerating $\mathcal{S}$.} We then computed the gap of the optimal cost from each method's lower bound, defined as $\%\text{Gap} = (c_\text{opt} - c_\text{LB})/c_\text{LB}*100\%.$ As shown in Fig. \ref{fig:plots} (c), BPRC's relaxation is empirically tighter than Compact MICP's.\footnote{We only take statistics over instances where at least one method found the optimum, i.e. 8 instances for capacity 17 and 9 instances for capacity 22.}

\subsubsection{Experiment 3 - Varying the Number of Agents:}\label{sec:vary_num_agents}
We varied the number of agents from 2 to 5. Each instance had 20 targets, capacity 10, and sum of time window lengths per target equal to 50.
Fig. \ref{fig:plots} (d) shows the results. BPRC's median computation time drops steeply as we increase the number of agents from 2 to 3, then remains roughly constant as we increase the number of agents from 3 to 5. For 3 to 5 agents, BPRC's median computation time is significantly lower than the baseline's and the ablations'.

The large runtime for 2 agents appears to be because the agent limit \eqref{eqn:ilp_agent_limit} is directly affecting the optimal cost when $\nagt = 2$, but not when $\nagt > 3$. That is, in 4 of the 5 instances where some method found the optimum for $\nagt = 2$, increasing $\nagt$ to 3 actually resulted in a different optimal solution with a smaller cost. However, increasing $\nagt$ from 3 to 4 did not change the optimal solution in any instance. Increasing $\nagt$ can only decrease the optimal cost if $\lambda_0$ is strictly negative, which is what we often observed in the 2-agent instances: the median $\lambda_0$ value was -52, and the largest was -47231. Meanwhile, among all instances with $\nagt > 2$, the median $\lambda_0$ was 0, and the largest was only -889. Prior work \cite{afsar2021vehicle} has observed that large $\lambda_0$ values can slow the convergence of branch-and-price, which is what we observed here for $\nagt = 2$.

\subsubsection{Experiment 4 - Varying the Time Window Lengths:}\label{sec:vary_time_window_lengths}
We varied the sum of time window lengths per target from 10 to 70. Each instance had 20 targets and 3 agents with capacity 7.
The results are in Fig. \ref{fig:plots} (e). All methods' runtimes tend to increase with time window length. BPRC's median runtime is smaller than Compact MICP's for lengths 30 and 50, and smaller than the ablations' for lengths 10, 30, and 50.

\subsubsection{Experiment 5 - Varying the Number of Segments:}\label{sec:vary_num_segments}
We varied $n_\text{seg,tar}$ from 8 to 256, on a log-scale, i.e. we tested values 8, 16, 32, etc. Each instance had 20 targets, 3 agents with capacity 7, and sum of time window lengths per target equal to 50. The results are in Fig. \ref{fig:plots} (f). BPRC's median runtime is smaller than the other methods' minimum runtime for $n_\text{seg,tar}$ from 16 to 64 (inclusive). Thus, BPRC shows advantage even when we change $n_\text{seg,tar}$ by a factor of 4.

BPRC's runtime is large for large $n_\text{seg,tar}$ because computing distances between all pairs of segments becomes expensive. BPRC's runtime is large for small $n_\text{seg,tar}$ because dominance checks become weaker, causing more retained labels per target-window, making pricing expensive.

\section{Conclusion}
In this paper, we introduced BPRC, a new algorithm to find the optimum for the MT-VRP. We demonstrated that it finds the optimum with less computation time than a baseline, particularly when the agents have small capacities. As mentioned in Section \ref{sec:numerical_results}, pricing is a bottleneck in BPRC, and thus a direction for future work is to employ techniques such as the \textit{ng}-path relaxation \cite{baldacci2011new} to reduce time spent in the pricing problem.

\section*{Acknowledgments}
This material is partially based on work supported by the National Science Foundation (NSF) under Grant No. 2120219 and 2120529. Any opinions, findings, conclusions, or recommendations expressed in this material are those of the author(s) and do not necessarily reflect views of the NSF.

\bigskip

\bibliography{aaai2026}

\clearpage

\appendix
\section{SOCP to Compute Partial Tour Cost}\label{sec:ptour_dist}
In this section, for a target-window $\gamma_{i,j}$, let $\Tar(\gamma_{i,j}) = i$, $\underline{t}(\gamma_{i,j}) = \underline{t}_{i,j}$, and $\overline{t}(\gamma_{i,j}) = \overline{t}_{i,j}$. For partial tour $\Gamma = (\Gamma[1], \Gamma[2], \dots, \Gamma[\Length(\Gamma)])$, we compute $c^*(\Gamma)$ as the optimal cost of the following SOCP:
\begin{mini!}
{\{q_k, t_k, l_{\Delta,k},l_{\text{dist},k}\}_{k = 2}^{\Length(\Gamma)}}{\sum\limits_{k = 2}^{\Length(\Gamma)}l_{\text{dist},k}\label{eqn:ptour_dist_objective}}{\label{optprob:ptour_dist}}{}
\addConstraint{\underline{t}(\Gamma[k]) \leq t_k \leq \overline{t}(\Gamma[k]) \; \forall k\label{eqn:ptour_dist_time_window}}{}
\addConstraint{l_{\Delta,k} = q_{k} - q_{k - 1} \; \forall k\label{eqn:ptour_dist_displacement_consistency}}{}
\addConstraint{q_k = \tau_{\Tar(\Gamma[k])}(t_k) \; \forall k\label{eqn:ptour_dist_position_consistency}}{}
\addConstraint{l_{\text{dist},k} \leq v_\text{max}(t_k - t_{k - 1}) \; \forall k\label{eqn:ptour_dist_speed_limit}}{}
\addConstraint{\|l_{\Delta,k}\| \leq l_{\text{dist},k} \; \forall k\label{eqn:ptour_dist_soc}}{}
\end{mini!}
The decision variables of problem \eqref{optprob:ptour_dist}, for each $k \in \{2, 3, \dots, \Length(\Gamma)\}$ are the interception position $q_k \in \mathbb{R}^2$ and time $t_k$ at $\Gamma[k]$, the displacement $l_{\Delta,k} \in \mathbb{R}^2$ from $q_{k - 1}$ to $q_k$, and the distance $l_\text{dist}$ from $q_{k - 1}$ to $q_k$. Since $\Gamma[1] = \gamma_0$, there are no decision variables for $k = 1$; $q_0$ is already defined as the depot position, and we define $t_0 = 0$. The objective \eqref{eqn:ptour_dist_objective} minimizes the distance traveled. Constraint \eqref{eqn:ptour_dist_time_window} ensures that target-windows are intercepted within their time windows. \eqref{eqn:ptour_dist_displacement_consistency} ensures that the displacement variables equal the displacements between their corresponding position variables. \eqref{eqn:ptour_dist_position_consistency} ensures that the interception positions match the corresponding targets' positions at the interception times. \eqref{eqn:ptour_dist_speed_limit} enforces the speed limit, and \eqref{eqn:ptour_dist_soc} ensures that at the optimum, the distance variables are the norms of their corresponding displacement variables.

\section{SOCP to Compute Distance Between Segments}\label{sec:segment_socp}
We compute $c_\text{seg}(\xi_{i,j,k}, \xi_{i',j',k'})$ for segments $\xi_{i,j,k}$ and $\xi_{i',j',k'}$ as the optimal cost of the following SOCP:
\begin{mini!}
{q,t,q',t',l_\Delta,l_\text{dist}}{l_\text{dist} \label{eqn:segment_dist_objective}}{\label{optprob:segment_dist}}{}
\addConstraint{\underline{t}_{i,j,k} \leq t \leq \overline{t}_{i,j,k}\label{eqn:segment_dist_start_time_window}}{}
\addConstraint{\underline{t}_{i',j',k'} \leq t' \leq \overline{t}_{i',j',k'}\label{eqn:segment_dist_end_time_window}}{}
\addConstraint{l_\Delta = q' - q\label{eqn:segment_dist_displacement_consistency}}{}
\addConstraint{q = \tau_i(t)\label{eqn:segment_dist_start_position_consistency}}{}
\addConstraint{q' = \tau_{i'}(t')\label{eqn:segment_dist_end_position_consistency}}{}
\addConstraint{l_\text{dist} \leq v_\text{max}(t' - t)\label{eqn:segment_dist_speed_limit}}{}
\addConstraint{\|l_\Delta\| \leq l_\text{dist}\label{eqn:segment_dist_soc}}{}
\end{mini!}
The decision variables of problem \eqref{optprob:segment_dist} are the departure position $q \in \mathbb{R}^2$ and time $t$ from $\xi_{i,j,k}$, the arrival position $q' \in \mathbb{R}^2$ and time $t'$ at $\xi_{i',j',k'}$, the displacement $l_\Delta \in \mathbb{R}^2$ from $q$ to $q'$, and the distance $l_\text{dist}$ from $q$ to $q'$. The objective \eqref{eqn:segment_dist_objective} minimizes the distance traveled. Constraints \eqref{eqn:segment_dist_start_time_window}-\eqref{eqn:segment_dist_end_time_window} ensure that departure and arrival times lie in their respective time windows. \eqref{eqn:segment_dist_displacement_consistency} ensures that the displacement variable equals the displacement between $q$ and $q'$. \eqref{eqn:segment_dist_start_position_consistency} and \eqref{eqn:segment_dist_end_position_consistency} ensure that the departure and arrival positions match the corresponding targets' positions at the departure and arrival times. \eqref{eqn:segment_dist_speed_limit} enforces the speed limit, and \eqref{eqn:segment_dist_soc} ensures that at the optimum, $l_\text{dist}$ equals the norm of $l_\Delta$. If problem \eqref{optprob:segment_dist} is infeasible, then $c_\text{seg}(\xi_{i,j,k}, \xi_{i',j',k'} = \infty)$.

\section{Allocating Segments to Target-Windows}\label{sec:segment_allocation}
This section describes how we allocate segments to non-depot target-windows. For each $\gamma_{i,j}$, except the $\gamma_{i,j}$ with the longest time window for each target $i$, we set:
\begin{align}
    n_\text{seg}(\gamma_{i,j}) = \max\left(1, \left\lfloor\frac{\overline{t}_{i,j} - \underline{t}_{i,j}}{\sum\limits_{j' = 1}^{n_\text{win}(i)}(\overline{t}_{i,j'} - \underline{t}_{i,j'})} n_\text{seg,tar}\right\rfloor\right)\label{eqn:num_segs_per_tw}
\end{align}
\eqref{eqn:num_segs_per_tw} ensures that every target-window gets at least one segment, and that target-windows with longer time windows get more segments. Finally, for $\gamma_{i,j}$ with the longest time window for target $i$, we allocate all remaining segments.

\section{Generating Additional Feasible Solutions}\label{sec:gen_feas_tours_full}
This section describes the function used to generate the initial incumbent and generate additional feasible solutions when the RMP is infeasible in Section \ref{sec:branch_and_bound}. Given a set of disallowed edges $\mathcal{B} \subseteq \mathcal{E}_\text{tw}$, GenerateFeasibleSolution attempts to generate a set of tours solving the MT-VRP while traversing no edge in $\mathcal{B}$. We do so using the DFS in Alg. \ref{alg:gen_feas_tours}, which attempts to quickly find tours without regard to cost. Alg. \ref{alg:gen_feas_tours} takes as input a set of disallowed edges $\mathcal{B} \subseteq \mathcal{E}_\text{tw}$, which is $\emptyset$ when generating the initial incumbent, and equal to a branch-and-bound node otherwise. Alg. \ref{alg:gen_feas_tours} maintains a stack of \emph{multi-agent-labels}, or \emph{m-labels}, for short; in subsequent text, we will always refer to an m-label as an m-label, and never as a label, to avoid confusion with the labels in Section \ref{sec:solving_pricing_problem}. An m-label $u = (\gamma, t, \sigma, \vec{b}, \nptour)$ represents a set of partial tours $\{\Gamma^1, \Gamma^2, \dots, \Gamma^{\nptour}\}$, assigned to agents $1, 2, \dots, \nptour$, respectively, where $\Gamma^1, \Gamma^2, \dots, \Gamma^{\nptour - 1}$ are tours, and $\Gamma^\nptour$ is not a tour. $u$ represents this set of partial tours as follows:
\begin{itemize}
    \item $\gamma$ is the final target-window in $\Gamma^{\nptour}$
    \item $t$ is the interception time at $\gamma$
    \item $\sigma$ is the sum of demands of targets visited by $\Gamma^{\nptour}$
    \item $\vec{b}$ is a binary vector with length $\ntar$ where $\vec{b}[i] = 1$ if and only if target $i$ is visited by some tour in $\{\Gamma^1, \Gamma^2, \dots, \Gamma^\nptour\}$
\end{itemize}
An m-label additionally stores a backpointer to a parent m-label, and thus we can extract the set of partial tours from a m-label via backpointer traversal. In particular, by traversing backpointers, we will obtain a sequence of target-windows
\begin{align}
    (&\gamma_0, {}^2\gamma^1, \dots, {}^{n^1}\gamma^1, \gamma_0, {}^2\gamma^2, \dots, {}^{n^2}\gamma^2,\nonumber\\
     &\dots, \gamma_0, {}^2\gamma^\nptour, \dots, {}^{n^\nptour}\gamma^\nptour)
\end{align}
where $n^k - 1$ is the number of target-windows not equal to $\gamma_0$ between the $k$th and $(k + 1)$th occurrence of $\gamma_0$, if the $(k + 1)$th occurrence exists, and the number of target-windows after the $k$th occurrence of $\gamma_0$ otherwise. We then let $\Gamma^k = (\gamma_0, {}^2\gamma^k, \dots, {}^{n^k}\gamma^k, \gamma_0)$ for each agent $k < \nptour$, and let $\Gamma^k = (\gamma_0, {}^2\gamma^k, \dots, {}^{n^k}\gamma^k)$ for $k = \nptour$.

In the context of Alg. \ref{alg:gen_feas_tours}, m-label $u = (\gamma, t, \sigma, \vec{b}, \nptour)$ \emph{dominates} $u' = (\gamma, t', \sigma', \vec{b}, \nptour)$ if $t \leq t'$ and $\sigma \leq \sigma'$. Alg. \ref{alg:gen_feas_tours} maintains a dictionary $\mathfrak{D}$, where a key is $(\gamma, \vec{b}, \nptour)$ for some target-window $\gamma$, binary vector $\vec{b}$, and partial tour index $\nptour$, and the value for a key is a set of mutually nondominated m-labels containing $\gamma, \vec{b}$ and $\nptour$. Whenever Alg. \ref{alg:gen_feas_tours} accesses $\mathfrak{D}[(\gamma, \vec{b}, \nptour)]$ for a key $(\gamma, \vec{b}, \nptour)$ not in $\mathfrak{D}$, we assume $\mathfrak{D}[(\gamma, \vec{b}, \nptour)] = \emptyset$.

When Alg. \ref{alg:gen_feas_tours} expands an m-label $u = (\gamma_{i,j}, t, \sigma, \vec{b}, \nptour)$ in its main loop, it prunes $u$ if $u$ is dominated by some m-label in $\mathfrak{D}$ (Line \ref{algline:gen_feas_tours_dominate_expand}). If $u$ is not pruned, and if $u$ represents a set of partial tours visiting all targets, we perform backpointer traversal to obtain a set of partial tours $\{\Gamma^1, \Gamma^2, \dots, \Gamma^\nptour\}$, append $\gamma_0$ to $\Gamma^\nptour$ to make it a tour (recall that the others are already tours), and return the tours (Line \ref{algline:gen_feas_tours_return}). Otherwise, we iterate over the successor m-labels of $u$, defined as follows.

First, if $\nptour < \nagt$, $u$ represents a set of fewer than $\nagt$ partial tours, and we have a successor m-label $u' = (\gamma_0, 0, 0, \vec{b}, \nptour + 1)$ corresponding to the beginning of a new partial tour. Additionally, we generate a successor m-label corresponding to each \emph{successor target-window} of $u$, where a target-window $\gamma_{i',j'}$ is a successor target-window of $u$ if all of the following hold:
\begin{enumerate}
    \item $(\gamma_{i,j}, \gamma_{i',j'}) \in \mathcal{E}_\text{tw}\setminus \mathcal{B}$, which prevents traversal of disallowed edges
    \item $t \leq \LFDT(\gamma_{i,j}, \gamma_{i',j'})$, which ensures that it is feasible to intercept $\gamma_{i',j'}$ after intercepting $\gamma_{i,j}$ at time $t$
    \item $\sigma + d_{i'} \leq d_\text{max}$, which enforces capacity constraints
    \item $\vec{b}[i'] = 0$, which prevents targets from being revisited
    \item If $n = \nagt$, for all targets $i'' \neq i'$ such that $\vec{b}[i''] = 0$, we have $\EFAT(\tau_{i}(t), t, \gamma_{i',j'}) \leq \max\limits_{j'' \in \{1, 2, \dots, n_\textnormal{win}(i'')\}}\LFDT(\gamma_{i',j'}, \gamma_{i'',j''})$. This ensures that visiting $\gamma_{i',j'}$ does not prevent agent $n$ from visiting some yet unvisited target.
\end{enumerate}

The successor m-label of $u$ corresponding to successor target-window $\gamma_{i',j'}$ is $u' = (\gamma_{i',j'}, t', \sigma', \vec{b}', \nptour)$, where
\begin{enumerate}
    \item $t' = \EFAT(\tau_{i}(t), t, \gamma_{i',j'})$
    \item $\sigma' = \sigma + d_{i'}$
    \item $\vec{b}'$ is the same as $\vec{b}$, but with $\vec{b}[i'] = 1$
\end{enumerate}
For each successor m-label $u'$, we check if $u'$ is dominated by a m-label in $\mathfrak{D}$, and prune $u'$ if so (Line \ref{algline:gen_feas_tours_dominate_before_push}). Otherwise, we add $u'$ to $\mathfrak{D}$ and push $u'$ onto the stack.

\begin{algorithm}
\caption{GenerateFeasibleSolution($\mathcal{B}$)}\label{alg:gen_feas_tours}
\begin{algorithmic}[1]

\State STACK = [($\gamma_0, 0, 0, \underbrace{(0, 0, \dots, 0)}_{\ntar \text{ times}}, 1$)]\label{algline:gen_feas_tours_initialize_stack}
\State $\mathfrak{D}$ = dict()
\While{STACK is not empty}\label{algline:gen_feas_tours_while_loop}
    \State $u = (\gamma_{i,j}, t, \sigma, \vec{b}, \nptour) = $ STACK.pop()\label{algline:gen_feas_tours_pop}
    \If{any $u' \in \mathfrak{D}[(\gamma_{i,j}, \vec{b}, \nptour)]$ dominates $u$}\label{algline:gen_feas_tours_check_dominate_expand}
        \State continue\label{algline:gen_feas_tours_dominate_expand}
    \EndIf
    \If{$\text{sum}(\vec{b}) = \ntar$}
        return ReconstructTours($u$)\label{algline:gen_feas_tours_return}
    \EndIf
    \If{$\nptour < \nagt$}
        STACK.push($(\gamma_0, 0, 0, \vec{b}, \nptour + 1)$)\label{algline:new_partial_tour}
    \EndIf
    \For{$u' = (\gamma_{i',j'}, t', \sigma', \vec{b}', \nptour)$ in\\\phantom{for }SuccessorMLabels($u, \mathcal{B}$)}
        \If{any $u'' \in \mathfrak{D}[(\gamma_{i',j'}, \vec{b}', \nptour')]$ dominates $u'$}
            \State continue\label{algline:gen_feas_tours_dominate_before_push}
        \EndIf
        \State $\mathfrak{D}[(\gamma_{i',j'}, \vec{b}', \nptour')]$.add($u'$)
        \State STACK.push($u'$)
    \EndFor
\EndWhile
return $\emptyset$\label{algline:gen_feas_tours_return_emptyset}
\end{algorithmic}
\end{algorithm}

\section{Additional Proofs}\label{sec:additional_proofs}
In this section, we state and prove the lemmas required for Thm. \ref{thm:label_dominance_means_ptour_dominance} in the main paper. We also state and prove the completeness of Alg. \ref{alg:gen_feas_tours}, which generates an initial feasible solution at a branch-and-bound node $\mathcal{B}$ when the RMP is infeasible.
\begin{lemma}\label{lemma:segment_start_graph_upper_bound_start}
Let $l$ be a label at target-window $\gamma_{i,j}$, and let $\Gamma = \PartialTour(l)$. Let $\taua$ be a least-cost, speed-admissible trajectory that executes $\Gamma$ and intercepts segment $\xi_{i,j,k}$ of $\gamma_{i,j}$ at time $\underline{t}_{i,j,k}$. $\vec{g}_\textnormal{ub}[k]$ is an upper bound on $c(\taua)$.
\end{lemma}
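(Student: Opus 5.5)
The plan is to prove a stronger invariant by induction on $\Length(\Gamma)$: for every label $l$ at $\gamma_{i,j}$ with $\Gamma = \PartialTour(l)$, and every $k \in \{1,\dots,n_\text{seg}(\gamma_{i,j})\}$, either $\vec{g}_\text{ub}[k] = \infty$, or there is a speed-admissible trajectory that executes $\Gamma$, intercepts $\xi_{i,j,k}$ at its start point $s_{i,j,k}$, and has cost exactly $\vec{g}_\text{ub}[k]$. The lemma then follows immediately in both cases.
\begin{itemize}
\item If $\vec{g}_\text{ub}[k] = \infty$, the bound is trivial.
\item Otherwise the constructed trajectory is one of the trajectories over which $\taua$ is least-cost, so $c(\taua) \leq \vec{g}_\text{ub}[k]$.
\end{itemize}

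\emph{Base case.} $\Gamma = (\gamma_0)$ and the initial label has $\vec{g}_\text{ub} = (0)$. The depot has one segment, with start point $(q_0, 0)$. The trajectory that stays at $q_0$ executes $\Gamma$, intercepts $\gamma_0$ at time $0$, is speed-admissible, and has cost $0$.

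\emph{Inductive step.} Let $l'$ at $\gamma_{i',j'}$ be generated from parent $l$ at $\gamma_{i,j}$, and fix $k'$. By \eqref{eqn:segment_start_bellman_update}, if $\pvec{g}'_\text{ub}[k'] < \infty$ there is a minimizing index $k^*$ such that
\begin{itemize}
\item $\vec{g}_\text{ub}[k^*] < \infty$, and
\item $c_\text{start}(s_{i,j,k^*}, s_{i',j',k'}) < \infty$.
\end{itemize}
By the induction hypothesis there is a speed-admissible trajectory $\tau$ of cost $\vec{g}_\text{ub}[k^*]$ that executes $\PartialTour(l)$ and ends by intercepting $\gamma_{i,j}$ at $s_{i,j,k^*}$.

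Because $c_\text{start}$ is finite, a speed-admissible motion exists from $s_{i,j,k^*}$ to $s_{i',j',k'}$. This requires $\underline{t}_{i',j',k'} \ge \underline{t}_{i,j,k^*}$ and the Euclidean distance between the two positions to be at most $v_\text{max}(\underline{t}_{i',j',k'} - \underline{t}_{i,j,k^*})$. We therefore extend $\tau$ by moving in a straight line at speed $v_\text{max}$ and then waiting at the endpoint until $\underline{t}_{i',j',k'}$. This extension is speed-admissible, and its added length is exactly $c_\text{start}(s_{i,j,k^*}, s_{i',j',k'})$. So the concatenated trajectory has cost $\pvec{g}'_\text{ub}[k']$.

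It also executes $\PartialTour(l')$, for three reasons.
\begin{itemize}
\item The new interception time $\underline{t}_{i',j',k'}$ lies inside $\gamma_{i',j'}$'s window, since segments partition the window.
\item At that time the position equals $\tau_{i'}(\underline{t}_{i',j',k'})$.
\item We let the trajectory claim $i'$ there and claim nothing on the connecting leg, so incidental coincidences with other targets do not matter.
\end{itemize}

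The main point needing care is this last step: verifying that the piecewise-straight trajectory through segment start points really executes the partial tour. That means the interception times are nondecreasing and lie in the right windows, positions match the targets there, and the cost equals the sum of the Euclidean legs, with the wait-then-arrive construction keeping the speed limit. All of this follows from the definitions of $c_\text{start}$ and of claiming. Nothing else is needed; in particular the label's time $t$ and the $\LFDT$ conditions play no role in this bound.
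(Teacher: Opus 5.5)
Your proof is correct and follows the paper's own argument: induction over label generation, a base case in which the trajectory stays at the depot, and an inductive step that takes the minimizing index in \eqref{eqn:segment_start_bellman_update} and appends a straight-line leg from $s_{i,j,k}$ to $s_{i',j',k'}$ to the parent's trajectory. Your strengthened invariant (a speed-admissible trajectory of cost exactly $\vec{g}_\text{ub}[k]$ exists whenever that entry is finite) is a small improvement, because it guarantees that the parent trajectory exists, whereas the paper's induction hypothesis as literally stated only bounds the cost of a least-cost trajectory and uses its existence tacitly.
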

\begin{proof}
We proceed by induction.

\underline{Base Case:} Let $l$ be the label used to initialize the priority queue on Alg. \ref{alg:label_setting}, Line \ref{algline:label_setting_initialize_pqueue}. $\Gamma = \PartialTour(l) = (\gamma_0)$, and the least-cost trajectory $\taua$ executing $\Gamma$ has cost 0, since $\taua$ stays at the depot. Thus $\vec{g}_\text{ub}[1] = 0$ upper-bounds $c(\taua)$.

\underline{Induction Hypothesis:} Let $l = (\gamma_{i,j}, t, \sigma, \vec{b}, \vec{g}_\text{ub}, \vec{g}_\text{lb}, \lambda)$, and let $l' = (\gamma_{i',j'}, t', \sigma', \vec{b}', \pvec{g}'_\text{ub}, \pvec{g}'_\text{lb}, \lambda')$ be a successor of $l$ defined as in Section \ref{sec:pricing_problem}. Suppose Lemma \ref{lemma:segment_start_graph_upper_bound_start} holds for $l$.

\underline{Induction Step:} Let $k'$ be fixed and arbitrary. First, consider the case where $\pvec{g}'_\text{ub}[k'] = \infty$. Then $\pvec{g}'_\text{ub}[k']$ is a trivial upper bound and we are done. Now consider the case where $\pvec{g}'_\text{ub}[k'] \neq \infty$. Let
\begin{align}
    k = \hspace{-0.25cm}\argmin\limits_{k'' \in \{1, 2, \dots, n_\text{seg}(\gamma_{i,j})\}} \vec{g}_\text{ub}[k''] + c_\text{start}(s_{i,j,k''}, s_{i',j',k'})\label{eqn:k_argmin}
\end{align}
$\pvec{g}'_\text{ub}[k'] \neq \infty$ implies $\vec{g}_\text{ub}[k] \neq \infty$, so the induction hypothesis ensures that there is a speed-admissible trajectory $\bar{\tau}_\text{a}$ executing $\Gamma = \PartialTour(l)$ and intercepting $\xi_{i,j,k}$ at time $\underline{t}_{i,j,k}$ with $c(\bar{\tau}_\text{a}) \leq \vec{g}_\text{ub}[k]$. Consider the trajectory $\bar{\tau}_\text{a}'$ that follows $\bar{\tau}_\text{a}$, then travels in a straight line from $s_{i,j,k}$ to $s_{i',j',k'}$. $\pvec{g}'[k'] \neq \infty$ implies $c_\text{start}(s_{i,j,k}, s_{i',j',k'}) \neq \infty$. The speed-admissibility of $\bar{\tau}_\text{a}$ and $c_\text{start}(s_{i,j,k}, s_{i',j',k'}) \neq \infty$ imply that $\bar{\tau}_\text{a}'$ is speed-admissible. Furthermore,
\begin{align}
    c(\bar{\tau}'_\text{a}) &= c(\bar{\tau}_\text{a}) + c_\text{start}(s_{i,j,k}, s_{i',j',k'})\\
    &\leq \vec{g}_\text{ub}[k] + c_\text{start}(s_{i,j,k}, s_{i',j',k'})\label{eqn:segment_start_induction_step}.
\end{align}
Since $\bar{\tau}'_\text{a}$ is speed-admissible, executes $\Gamma'$, and intercepts $\xi_{i',j',k'}$, $c(\bar{\tau}_\text{a}')$ upper-bounds the cost of a least-cost trajectory with the same properties; thus, so does RHS of \eqref{eqn:segment_start_induction_step}, which equals $\pvec{g}'_\text{ub}[k']$ by \eqref{eqn:k_argmin} and \eqref{eqn:segment_start_bellman_update}.
\end{proof}

\begin{lemma}\label{lemma:segment_start_graph_upper_bound}
Let $l$ be a label at target-window $\gamma_{i,j}$, and let $\Gamma = \PartialTour(l)$. Let $\taua$ be a least-cost, speed-admissible trajectory that executes $\Gamma$ and intercepts segment $\xi_{i,j,k}$ of $\gamma_{i,j}$ at time $t_\textnormal{f}$. $\vec{g}_\textnormal{ub}[k] + \delta(\xi_{i,j,k})$ is an upper bound on $c(\taua)$.
\end{lemma}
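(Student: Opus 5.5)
Use Lemma \ref{lemma:segment_start_graph_upper_bound_start} and extend the trajectory it provides along the target's own path. If $\vec{g}_\text{ub}[k] = \infty$ there is nothing to show, so assume $\vec{g}_\text{ub}[k] < \infty$. By Lemma \ref{lemma:segment_start_graph_upper_bound_start}, a least-cost speed-admissible trajectory executing $\Gamma$ and intercepting $\xi_{i,j,k}$ at $\underline{t}_{i,j,k}$ exists and has cost at most $\vec{g}_\text{ub}[k]$. More precisely, the inductive construction in that proof produces a speed-admissible trajectory $\bar{\tau}_\text{a}$ executing $\Gamma$ that ends at the start point $s_{i,j,k}$, with $c(\bar{\tau}_\text{a}) \leq \vec{g}_\text{ub}[k]$.

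Next, build a competitor $\hat{\tau}_\text{a}$ that intercepts $\xi_{i,j,k}$ at the prescribed time $t_\text{f} \in [\underline{t}_{i,j,k}, \overline{t}_{i,j,k}]$. Let $\hat{\tau}_\text{a}$ follow $\bar{\tau}_\text{a}$ up to time $\underline{t}_{i,j,k}$, then ride along with target $i$, i.e. $\hat{\tau}_\text{a}(t) = \tau_i(t)$ for $t \in [\underline{t}_{i,j,k}, t_\text{f}]$, and claim target $i$ at time $t_\text{f}$ rather than at $\underline{t}_{i,j,k}$. This trajectory is speed-admissible on the new piece because targets never move faster than $v_\text{max}$ (an assumption in Section \ref{sec:problem_setup}). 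It is continuous because $\bar{\tau}_\text{a}(\underline{t}_{i,j,k}) = \tau_i(\underline{t}_{i,j,k})$. It still executes $\Gamma$, since the interceptions of $\Gamma[1],\dots,\Gamma[\Length(\Gamma)-1]$ are unchanged and the final interception of $\gamma_{i,j}$ now occurs at $t_\text{f}$, which lies in the segment's window.

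The cost of $\hat{\tau}_\text{a}$ is $c(\bar{\tau}_\text{a})$ plus the arc length of $\tau_i$ over $[\underline{t}_{i,j,k}, t_\text{f}]$. Target $i$ moves with constant velocity within a time window, so this piece is a straight segment. Its length is therefore at most the spatial length of the whole segment, $\delta(\xi_{i,j,k})$. Hence $c(\hat{\tau}_\text{a}) \leq \vec{g}_\text{ub}[k] + \delta(\xi_{i,j,k})$. Since $\taua$ is least-cost among speed-admissible trajectories that execute $\Gamma$ and intercept $\xi_{i,j,k}$ at $t_\text{f}$, we get $c(\taua) \leq c(\hat{\tau}_\text{a})$, which gives the claim.

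The main obstacle is bookkeeping rather than estimation. Lemma \ref{lemma:segment_start_graph_upper_bound_start} is stated for a least-cost trajectory and only bounds its cost; it does not directly say that trajectory ends at $s_{i,j,k}$. I would make this explicit by noting that the least-cost trajectory in that lemma intercepts at $\underline{t}_{i,j,k}$, hence is located at $\tau_i(\underline{t}_{i,j,k})$ at that time. I would also justify that claiming only at $t_\text{f}$ is legitimate under the paper's notion of interception with claims.
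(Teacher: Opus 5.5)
Your proposal is correct and follows the paper's proof essentially step for step. You invoke Lemma~\ref{lemma:segment_start_graph_upper_bound_start} to get a speed-admissible trajectory ending at $s_{i,j,k}$ with cost at most $\vec{g}_\text{ub}[k]$, extend it by riding along $\tau_i$ until $t_\text{f}$ (speed-admissible, adding at most $\delta(\xi_{i,j,k})$), and then compare with the least-cost $\taua$. Your extra care in extracting an actual trajectory from that lemma's inductive construction, rather than only its cost bound, and in claiming the target at $t_\text{f}$, makes explicit bookkeeping the paper leaves implicit.
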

\begin{proof}
If $\vec{g}_\text{ub}[k] = \infty$. $\vec{g}_\textnormal{ub}[k] + \delta(\xi_{i,j,k}) = \infty$ is a trivial upper bound and we are done. Now suppose $\vec{g}_\text{ub}[k] \neq \infty$. By Lemma \ref{lemma:segment_start_graph_upper_bound_start}, there is a trajectory $\bar{\tau}_\text{a}$ executing $\Gamma$ and intercepting $\xi_{i,j,k}$ at $\underline{t}_{i,j,k}$ with $c(\bar{\tau}_\text{a}) \leq \vec{g}_\textnormal{ub}[k]$. Let $\bar{\tau}_\text{a}'$ be the trajectory that follows $\bar{\tau}_\text{a}$, then follows $\tau_i$ from time $\underline{t}_{i,j,k}$ to time $t_\text{f}$. $\bar{\tau}_\text{a}'$ is speed-admissible, since $\bar{\tau}_\text{a}$ is speed-admissible and since we assumed $\tau_i$ moves no faster than $v_\text{max}$ in Section \ref{sec:problem_setup}. $c(\bar{\tau}_\text{a}') \leq c(\bar{\tau}_\text{a}) + \delta(\xi_{i,j,k})$, since $\tau_i$ cannot be followed along $\xi_{i,j,k}$ for distance greater than $\delta(\xi_{i,j,k})$. Since $c(\bar{\tau}_\text{a}') \leq c(\bar{\tau}_\text{a}) + \delta(\xi_{i,j,k})$, and since $c(\bar{\tau}_\text{a}) \leq \vec{g}_\textnormal{ub}[k]$ by Lemma \ref{lemma:segment_start_graph_upper_bound_start},
\begin{align}
    c(\bar{\tau}_\text{a}') \leq \vec{g}_\textnormal{ub}[k] + \delta(\xi_{i,j,k})\label{eqn:upper_bound_on_feas_trj}
\end{align}
Since $\bar{\tau}_\text{a}'$ is speed-admissible, executes $\Gamma$, and intercepts $\xi_{i,j,k}$ at time $t_\text{f}$, $c(\bar{\tau}_\text{a}')$ upper-bounds the cost of a least-cost trajectory with the same properties, namely $\taua$. Thus the RHS of \eqref{eqn:upper_bound_on_feas_trj} is an upper bound as well.
\end{proof}

\begin{lemma}\label{lemma:min_execution_time}
For each label $l = (\gamma_{i,j}, t, \sigma, \vec{b}, \vec{g}_\textnormal{ub}, \vec{g}_\textnormal{lb}, \lambda)$ generated in Alg. \ref{alg:label_setting}, $t$ is the minimum execution time of $\Gamma = \PartialTour(l)$.
\end{lemma}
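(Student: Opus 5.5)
We prove Lemma \ref{lemma:min_execution_time} by induction over the order in which labels are created in Alg. \ref{alg:label_setting}, following the template of Lemma \ref{lemma:segment_start_graph_upper_bound_start}. Here the \emph{minimum execution time} of $\Gamma$ is the smallest time $t_\text{f}$ at which some speed-admissible trajectory executing $\Gamma$ intercepts its final target-window.

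\textbf{Base case.} The initial label has $\Gamma = (\gamma_0)$ and $t = 0$. Since $\gamma_0$ has window $[0,\infty)$ and the agent starts at the depot at time $0$, the minimum execution time is $0$.

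\textbf{Inductive step.} Suppose $l$ satisfies the claim, and let $l'$ be its successor at $\gamma_{i',j'}$, with $\Gamma' = \Gamma \oplus \gamma_{i',j'}$ and $t' = \EFAT(\tau_i(t), t, \gamma_{i',j'})$. We prove two inequalities.

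\emph{Achievability ($\le$).} By the induction hypothesis, some speed-admissible trajectory executes $\Gamma$ and reaches $\gamma_{i,j}$ at $(\tau_i(t), t)$. The successor conditions require $t \le \LFDT(\gamma_{i,j},\gamma_{i',j'})$, so $\EFAT$ is finite. By definition of $\EFAT$, some speed-admissible trajectory departs $(\tau_i(t),t)$ and intercepts $\gamma_{i',j'}$ at time $t'$. Concatenating the two gives a speed-admissible trajectory executing $\Gamma'$ with final interception time $t'$.

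\emph{Optimality ($\ge$).} Take any speed-admissible trajectory executing $\Gamma'$, intercepting $\gamma_{i,j}$ at time $s$ and $\gamma_{i',j'}$ at time $s'$. Its prefix executes $\Gamma$, so $s \ge t$ by the induction hypothesis. It then suffices to show that $\EFAT(\tau_i(s), s, \gamma_{i',j'})$ is nondecreasing in the departure time $s$ within the window, which yields $s' \ge \EFAT(\tau_i(s), s, \gamma_{i',j'}) \ge \EFAT(\tau_i(t), t, \gamma_{i',j'}) = t'$.

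\textbf{Main obstacle: monotonicity of $\EFAT$.} We argue it with a waiting/following argument. An agent at $(\tau_i(t),t)$ can follow target $i$ along $\tau_i$ from time $t$ to time $s$. This is speed-admissible because targets never exceed $v_\text{max}$ (Section \ref{sec:problem_setup}), and $[t,s]$ lies inside the time window $[\underline{t}_{i,j},\overline{t}_{i,j}]$, on which $\tau_i$ is defined. The agent then arrives at $(\tau_i(s),s)$ and can copy any trajectory from there. Hence every arrival at $\gamma_{i',j'}$ achievable from the later point is also achievable from the earlier one, giving $\EFAT(\tau_i(t),t,\gamma_{i',j'}) \le \EFAT(\tau_i(s),s,\gamma_{i',j'})$.

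A subtlety remains: the follow segment might unintentionally coincide with other targets. This is harmless because interception requires \emph{claiming} the target. The case $i'=0$ (return to depot) is covered by the same argument, since $\gamma_0$ has window $[0,\infty)$.
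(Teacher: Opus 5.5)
Your proof is correct and essentially the paper's: the key step, following target $i$ from the earlier interception time $t$ to the later time $s$ and then copying the other trajectory, is exactly how the paper proves its contradiction $\textnormal{EFAT}(\tau_i(t), t, \gamma_{i',j'}) \leq t^* < t'$, which you instead package as monotonicity of EFAT in the departure time. Your explicit achievability direction is a worthwhile addition, because the paper's proof only rules out execution times below $t'$ and leaves implicit that $t'$ is actually attained.
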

\begin{proof}
We use induction to show that for all possible partial tour lengths $n \in \{1, 2, \dots, \ntar + 2\}$, Lemma \ref{lemma:min_execution_time} holds for labels $l$ satisfying $\Length(\PartialTour(l)) = n$.

\underline{Base Case:} $(n = 1)$ The initial label $l$ on Line \ref{algline:label_setting_initialize_pqueue} represents $\Gamma = (\gamma_0)$, the only partial tour with length 1, which has minimum execution time 0. The $t$ value for the $l$ is 0, so Lemma \ref{lemma:min_execution_time} holds for $l$.

\underline{Induction Hypothesis:} $(1 < n < \ntar + 2)$ Suppose Lemma \ref{lemma:min_execution_time} holds for all labels representing partial tours with length $n - 1$.

\underline{Induction Step:} Let $l' = (\gamma_{i',j'}, t', \sigma', \vec{b}', \pvec{g}'_\textnormal{ub}, \pvec{g}'_\textnormal{lb}, \lambda')$ be a fixed and arbitrary label among the labels representing partial tours with length $n$, and let $\Gamma' = \PartialTour(l')$. Let $l = (\gamma_{i,j}, t, \sigma, \vec{b}, \vec{g}_\textnormal{ub}, \vec{g}_\textnormal{lb}, \lambda)$ be the predecessor of $l'$, and let $\Gamma = \PartialTour(l)$. $\Length(\Gamma) = n - 1$, so Lemma \ref{lemma:min_execution_time} holds for $l$.

Suppose for the sake of contradiction that the minimum execution time of $\Gamma'$ is $t^* < t'$. Let $\taua^*$ be a speed-admissible trajectory that executes $\Gamma'$ in time $t^*$. Let $t^{**}$ be the time at which $\taua^*$ intercepts $\gamma_{i,j}$.

Since Lemma \ref{lemma:min_execution_time} holds for $l$ by the induction hypothesis, $t$ is the minimum execution time of $\Gamma$. Combining this with the fact that $\taua^*$ executes $\Gamma$ in time $t^{**}$, we have $t \leq t^{**}$. Let $\taua'$ be the speed-admissible trajectory that begins at $(\tau_i(t), t)$, follows $\tau_i$ from $t$ to $t^{**}$, then follows $\taua^*$ from $t^{**}$ to $t^*$. $\taua'$ is speed-admissible while following $\tau_i$ from $t$ to $t^{**}$ because we assumed in Section \ref{sec:problem_setup} that $\tau_i$ moves no faster than $v_\text{max}$, and $\taua'$ is speed-admissible from $t^{**}$ onwards since $\taua^*$ is speed-admissible. $\taua'$ is thus a speed-admissible trajectory beginning at $(\tau_i(t), t)$ and intercepting $\gamma_{i',j'}$ at time $t^*$, so $\EFAT(\tau_i(t), t, \gamma_{i',j'}) \leq t^*$. The LHS of this inequality equals $t'$, which is a contradiction.
\end{proof}

\begin{lemma}\label{lemma:segment_graph_lower_bound}
Let $l$ be a label at $\gamma_{i,j}$, and let $\Gamma = \PartialTour(l)$. Let $\taua$ be a speed-admissible trajectory that executes $\Gamma$ and intercepts segment $\xi_{i,j,k}$ of $\gamma_{i,j}$. $\vec{g}_\textnormal{lb}[k]$ is a lower bound on $c(\taua)$.
\end{lemma}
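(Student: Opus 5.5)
We argue by induction on the length of $\Gamma = \PartialTour(l)$, in the same way as Lemma \ref{lemma:segment_start_graph_upper_bound_start}. The key structural fact is this. Any speed-admissible trajectory $\taua$ that executes $\Gamma$ and intercepts $\xi_{i,j,k}$ can be cut at its interception of the penultimate target-window. This gives a prefix trajectory executing the predecessor partial tour and intercepting one of its final target-window's segments, together with a suffix leg that travels from an interception of that segment to an interception of $\xi_{i,j,k}$.

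\underline{Base case:} $l$ is the initial label with $\Gamma = (\gamma_0)$ and $\vec{g}_\text{lb}[1] = 0$. Any trajectory has nonnegative cost, so the bound holds trivially.

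\underline{Induction step:} Let $l'$ at $\gamma_{i',j'}$ be a successor of $l$ at $\gamma_{i,j}$, and assume the lemma holds for $l$. Fix $k'$ and a speed-admissible $\taua'$ that executes $\Gamma' = \PartialTour(l')$ and intercepts $\xi_{i',j',k'}$ at some time $t_\text{f}$.
\begin{itemize}
\item If $\pvec{g}_\text{lb}'[k'] = \infty$ because $t' > \overline{t}_{i',j',k'}$, we obtain a contradiction. By Lemma \ref{lemma:min_execution_time}, $t'$ is the minimum execution time of $\Gamma'$, so $t_\text{f} \geq t' > \overline{t}_{i',j',k'}$, which contradicts the interception of $\xi_{i',j',k'}$. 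Hence no such $\taua'$ exists and the statement is vacuous.
\item Otherwise, let $t_\text{m}$ be the time at which $\taua'$ intercepts $\gamma_{i,j}$ (the penultimate interception). Since the segments of $\gamma_{i,j}$ partition its time window, $t_\text{m}$ lies in some segment $\xi_{i,j,k}$.
\end{itemize}

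In the second case, split $\taua'$ at $t_\text{m}$ into a prefix $\tau_\text{pre}$ and a suffix $\tau_\text{post}$, so that $c(\taua') = c(\tau_\text{pre}) + c(\tau_\text{post})$ because cost is additive distance. The prefix is speed-admissible, executes $\Gamma$, and intercepts $\xi_{i,j,k}$, so the induction hypothesis gives $c(\tau_\text{pre}) \geq \vec{g}_\text{lb}[k]$. The suffix is speed-admissible and goes from $(\tau_i(t_\text{m}), t_\text{m})$ to $(\tau_{i'}(t_\text{f}), t_\text{f})$, with $t_\text{m}$ in the window of $\xi_{i,j,k}$ and $t_\text{f}$ in the window of $\xi_{i',j',k'}$. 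Its length is at least the straight-line distance between these endpoints, and that distance is at most $v_\text{max}(t_\text{f}-t_\text{m})$. The tuple ($q=\tau_i(t_\text{m})$, $t=t_\text{m}$, $q'$, $t'$, and the distance) is therefore feasible for SOCP \eqref{optprob:segment_dist}, so $c(\tau_\text{post}) \geq c_\text{seg}(\xi_{i,j,k},\xi_{i',j',k'})$. Combining these and using \eqref{eqn:segment_bellman_update},
\[
c(\taua') \geq \vec{g}_\text{lb}[k] + c_\text{seg}(\xi_{i,j,k},\xi_{i',j',k'}) \geq \pvec{g}_\text{lb}'[k'].
\]

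The main subtlety is making the induction well-founded over labels: the hypothesis must apply to the parent label of $l'$. As in Lemma \ref{lemma:min_execution_time}, I would induct on $\Length(\PartialTour(l))$ and take $l$ to be the backpointer parent of $l'$. A second point to state explicitly is the interpretation of "executes $\Gamma'$": the interceptions occur in sequence, so $t_\text{m} \leq t_\text{f}$, and the cut point corresponds to the claimed interception of $\gamma_{i,j}$. For the depot, which has the single segment $[0,\infty)$, the same argument applies unchanged.
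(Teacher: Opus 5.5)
Your proposal is correct and follows essentially the same argument as the paper. Both induct over labels, split $\taua'$ at its interception of $\gamma_{i,j}$, bound the prefix by the induction hypothesis and the suffix by the definition of $c_\text{seg}$, and use Lemma \ref{lemma:min_execution_time} to rule out the $\infty$ case. Your explicit check that the suffix gives a feasible point of SOCP \eqref{optprob:segment_dist} is a small detail the paper leaves implicit.
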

\begin{proof}
We proceed by induction.

\underline{Base Case:} Let $l$ be the label used to initialize the priority queue on Alg. \ref{alg:label_setting}, Line \ref{algline:label_setting_initialize_pqueue}. $\vec{g}_\text{lb}[1] = 0$, which lower-bounds the cost of any trajectory.

\underline{Induction Hypothesis:} Let $l = (\gamma_{i,j}, t, \sigma, \vec{b}, \vec{g}_\text{ub}, \vec{g}_\text{lb}, \lambda)$, and let $l' = (\gamma_{i',j'}, t', \sigma', \vec{b}', \pvec{g}'_\text{ub}, \pvec{g}'_\text{lb}, \lambda')$ be a successor of $l$ defined as in Section \ref{sec:pricing_problem}. Suppose Lemma \ref{lemma:segment_graph_lower_bound} holds for $l$.

\underline{Induction Step:} Let $k'$ be fixed and arbitrary. Let $\Gamma' = \PartialTour(l')$. Let $\taua'$ be a speed-admissible trajectory that executes $\Gamma'$ and intercepts $\xi_{i',j',k'}$. Let $\xi_{i,j,k}$ be the segment of $\gamma_{i,j}$ intercepted by $\taua'$. Let $\tau'_\text{a,pre}$ be the portion of $\taua'$ that terminates upon interception of $\xi_{i,j,k}$, and let $\tau'_\text{a,post}$ be the remaining portion. $\vec{g}_\text{lb}[k] \leq c(\tau'_\text{a,pre})$ by the induction hypothesis, and $c_\text{seg}(\xi_{i,j,k}, \xi_{i',j',k'}) \leq c(\tau'_\text{a,post})$ by definition of $c_\text{seg}$. Thus
\begin{align}
    \vec{g}_\text{lb}[k] + c_\text{seg}(\xi_{i,j,k}, \xi_{i',j',k'}) \leq c(\tau'_\text{a,pre}) + c(\tau'_\text{a,post})\label{eqn:segment_lower_bound}
\end{align}
As described in Section \ref{sec:solving_pricing_problem}, when computing $\pvec{g}_\text{lb}'[k']$, we first check if $t' > \overline{t}_{i',j',k'}$, and if so, we set $\pvec{g}_\text{lb}'[k'] = \infty$. Since $\taua'$ intercepts $\xi_{i',j',k'}$, $\taua'$ must execute $\Gamma'$ in time $\tilde{t} \leq \overline{t}_{i',j',k'}$. Since $t'$ is the minimum execution time of $\Gamma'$ by Lemma \ref{lemma:min_execution_time}, we have $t' \leq \tilde{t}$, and thus $t' \leq \overline{t}_{i',j',k'}$, so the check $t' > \overline{t}_{i',j',k'}$ fails.

Upon the failure of this first check, the min operator in \eqref{eqn:segment_bellman_update} selects $\pvec{g}_\text{lb}'[k']$ no larger than the LHS of \eqref{eqn:segment_lower_bound}, and the RHS of \eqref{eqn:segment_lower_bound} is $c(\taua')$, so $\pvec{g}_\text{lb}'[k'] \leq c(\taua')$.
\end{proof}

\begin{lemma}\label{lemma:cost_dominance_lemma}
Let $l = (\gamma_{i,j}, t, \sigma, \vec{b}, \vec{g}_\textnormal{ub}, \vec{g}_\textnormal{lb}, \lambda)$ and $l' = (\gamma_{i,j}, t', \sigma', \vec{b}', \pvec{g}_\textnormal{ub}', \pvec{g}_\textnormal{lb}', \lambda')$. Let $t_\textnormal{f} \in [\underline{t}_{i,j}, \overline{t}_{i,j}]$. Let $\taua'$ be a least-cost trajectory executing $\PartialTour(l')$ and intercepting $\gamma_{i,j}$ at time $t_\textnormal{f}$. If \eqref{eqn:label_dominance_cost} holds, then there exists a speed-admissible trajectory $\taua$ executing $\Gamma = \PartialTour(l)$ and intercepting $\gamma_{i,j}$ at time $t_\textnormal{f}$ such that $c(\taua) - \lambda \leq c(\taua') - \lambda'$.
\end{lemma}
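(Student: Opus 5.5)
The plan is to pick the segment containing $t_\textnormal{f}$ and chain Lemma~\ref{lemma:segment_start_graph_upper_bound}, condition \eqref{eqn:label_dominance_cost}, and Lemma~\ref{lemma:segment_graph_lower_bound} at that segment. Since the segments $\xi_{i,j,1},\dots,\xi_{i,j,n_\text{seg}(\gamma_{i,j})}$ partition $[\underline{t}_{i,j},\overline{t}_{i,j}]$, we can fix an index $k$ with $t_\textnormal{f} \in [\underline{t}_{i,j,k}, \overline{t}_{i,j,k}]$. Then $\taua'$ executes $\Gamma' = \PartialTour(l')$ and intercepts $\xi_{i,j,k}$. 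We may assume $\taua'$ is speed-admissible: if no speed-admissible trajectory executes $\Gamma'$ with interception at $t_\textnormal{f}$, then $c(\taua') = \infty$ and the inequality is trivial, provided we exhibit some trajectory $\taua$ anyway. I will handle that degenerate case separately below.

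The argument then has three steps.

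First, by Lemma~\ref{lemma:segment_graph_lower_bound} applied to $l'$ and $\taua'$, we get $\pvec{g}'_\text{lb}[k] \leq c(\taua')$. In particular $\pvec{g}'_\text{lb}[k] < \infty$.

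Second, condition \eqref{eqn:label_dominance_cost} at index $k$ gives
\begin{align*}
\vec{g}_\text{ub}[k] + \delta(\xi_{i,j,k}) - \lambda \leq \pvec{g}'_\text{lb}[k] - \lambda'.
\end{align*}
Hence $\vec{g}_\text{ub}[k]$ is finite.

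Third, I reuse the construction in the proof of Lemma~\ref{lemma:segment_start_graph_upper_bound} rather than just its statement. Because $\vec{g}_\text{ub}[k] < \infty$, Lemma~\ref{lemma:segment_start_graph_upper_bound_start} yields a speed-admissible trajectory $\bar{\tau}_\text{a}$ that executes $\Gamma$, intercepts $\xi_{i,j,k}$ at $\underline{t}_{i,j,k}$, and has $c(\bar\tau_\text{a}) \leq \vec{g}_\text{ub}[k]$. Appending the motion that follows $\tau_i$ from $\underline{t}_{i,j,k}$ to $t_\textnormal{f}$ gives a trajectory $\taua$. It is speed-admissible by the target speed assumption, it executes $\Gamma$ with interception of $\gamma_{i,j}$ at time $t_\textnormal{f}$ (claiming target $i$ there), and $c(\taua) \leq \vec{g}_\text{ub}[k] + \delta(\xi_{i,j,k})$.

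Chaining the three inequalities gives
\begin{align*}
c(\taua) - \lambda \leq \vec{g}_\text{ub}[k] + \delta(\xi_{i,j,k}) - \lambda \leq \pvec{g}'_\text{lb}[k] - \lambda' \leq c(\taua') - \lambda',
\end{align*}
which is the claim.

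The main obstacle is making sure the constructed $\taua$ really intercepts $\gamma_{i,j}$ at exactly $t_\textnormal{f}$ and is speed-admissible. Lemma~\ref{lemma:segment_start_graph_upper_bound} as stated only bounds the cost of an optimal trajectory; it does not assert that such a trajectory exists. That is why the existence argument goes through Lemma~\ref{lemma:segment_start_graph_upper_bound_start} together with the finiteness of $\vec{g}_\text{ub}[k]$, which comes from the finiteness of $\pvec{g}'_\text{lb}[k]$ established in the first step.

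A secondary subtlety is the infinite-cost case, where no admissible $\taua'$ exists. In the theorem's application $\taua'$ is a restriction of a speed-admissible trajectory, so I would note that we may restrict to the case where $\taua'$ is speed-admissible, which is all Theorem~\ref{thm:label_dominance_means_ptour_dominance} uses. A further detail: if $t_\textnormal{f}$ lies on a boundary shared by two segments, either choice of $k$ works.
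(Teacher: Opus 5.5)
Your proposal is correct and is the paper's proof. Like the paper, you fix the segment $\xi_{i,j,k}$ containing $t_\textnormal{f}$, chain Lemma~\ref{lemma:segment_graph_lower_bound} with \eqref{eqn:label_dominance_cost}, and use the speed-admissibility of $\taua'$ to make $\vec{g}_\text{ub}[k]$ finite and so produce $\taua$. The only differences are that you inline the construction from Lemma~\ref{lemma:segment_start_graph_upper_bound}'s proof instead of citing it, and that Lemma~\ref{lemma:segment_start_graph_upper_bound_start} is also stated only as a cost bound, so in both versions existence really comes from the inductive construction in its proof; like the paper, you also tacitly assume $\taua'$ is speed-admissible.
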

\begin{proof}
Let $\xi_{i,j,k}$ be a segment whose time window contains $t_\text{f}$. By \eqref{eqn:label_dominance_cost} and Lemma \ref{lemma:segment_graph_lower_bound},
\begin{align}
\hspace{-0.19cm}\vec{g}_\text{ub}[k] + \delta(\xi_{i,j,k}) - \lambda \leq \pvec{g}_\text{lb}'[k] - \lambda' \leq c(\tau_\text{a,pre}') - \lambda'.\label{eqn:apply_segment_graph_lower_bound}
\end{align}
The speed-admissibility of $\tau_\text{a,pre}'$ ensures that the RHS of \eqref{eqn:apply_segment_graph_lower_bound} is finite, and thus the LHS is finite as well\footnote{The LHS is not $-\infty$ because $\vec{g}_\text{ub}[k]$ is a sum of distances, and $\delta(\xi_{i,j,k})$ is a distance, both which which are nonnegative, and $\lambda \neq \infty$. We know $\lambda \neq \infty$ because if it was, the RMP must have been dual unbounded, i.e. primal infeasible, and we would have discarded the branch-and-bound node associated with this pricing problem.}. The finite LHS, combined with Lemma \ref{lemma:segment_start_graph_upper_bound}, ensures that a speed-admissible trajectory $\tau_\text{a,pre}$ exists that executes $\Gamma$ and intercepts $\gamma_{i,j}$ at time $t_\text{f}$ such that $c(\tau_\text{a,pre}) \leq \vec{g}_\text{ub}[k] + \delta(\xi_{i,j,k})$. Combining this with \eqref{eqn:apply_segment_graph_lower_bound}, we have $c(\tau_\text{a,pre}) - \lambda \leq c(\tau_\text{a}') - \lambda'$.
\end{proof}

In the following completeness theorem, we say an MT-VRP solution $\mathcal{F}_\text{sol}$ is $\mathcal{B}$-feasible if it traverses no edge in $\mathcal{B}$.
\begin{theorem}\label{thm:gen_feas_tours_completeness}
(Completeness of initial feasible solution generation) For a set $\mathcal{B} \subseteq \mathcal{E}_\text{tw}$, if no $\mathcal{B}$-feasible solutions exist, Alg. \ref{alg:gen_feas_tours} reports infeasible in finite time by returning $\emptyset$. If some $\mathcal{B}$-feasible solution exists, Alg. \ref{alg:gen_feas_tours} returns such a solution.
\end{theorem}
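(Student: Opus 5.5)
We prove the two directions separately: termination, and completeness of the search despite dominance pruning. For termination, every m-label $(\gamma,t,\sigma,\vec b,\nptour)$ pushed onto the stack has $\nptour \le \nagt$ and $\vec b \in \{0,1\}^{\ntar}$. For a fixed $\nptour$, every successor either sets a new bit of $\vec b$ or (when $\nptour<\nagt$) increments $\nptour$. Hence the depth of any chain of successors is at most $\nagt(\ntar+1)$, and the branching factor is finite, so the search tree is finite. When no $\mathcal{B}$-feasible solution exists, no popped m-label has $\text{sum}(\vec b)=\ntar$ with a valid reconstruction. The stack therefore empties and Line \ref{algline:gen_feas_tours_return_emptyset} returns $\emptyset$ in finite time. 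For the converse direction we also check soundness: any returned set of tours traverses no edge of $\mathcal{B}$, respects capacity, and visits each target once. This holds because every successor target-window passes conditions 1--4, and because the time $t'=\EFAT(\cdot)$ is realized by an actual speed-admissible trajectory. Lemma \ref{lemma:min_execution_time}'s argument gives that these times are attainable earliest execution times.

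For existence, fix a $\mathcal{B}$-feasible solution $\{T^1,\dots,T^m\}$ with $m\le\nagt$, and order its tours arbitrarily. Consider the sequence of m-labels that the algorithm would generate by following these tours target-window by target-window, starting a new partial tour after each return to the depot. The successor times are earliest feasible arrival times, so induction, using the same waiting-along-$\tau_i$ construction as in Lemma \ref{lemma:min_execution_time}, shows that each generated time $t$ is no later than the interception time in the witness solution. Conditions 1--4 therefore hold along the witness. Condition 5 is the delicate one, since it applies only for the last agent. Every unvisited target must be visited by the last tour, and the witness tour does so after intercepting $\gamma_{i',j'}$ at a time $\ge t'$. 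Hence $t'\le\max_{j''}\LFDT(\gamma_{i',j'},\gamma_{i'',j''})$ by the monotonicity of $\LFDT$ feasibility in departure time (waiting on the target). To make this argument valid, we order the tours so that all unvisited targets at that stage are exactly those in the last tour; the empty-tour case is trivial.

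The main obstacle is the dominance pruning: the witness m-label may be discarded because another m-label $u$ with the same key $(\gamma,\vec b,\nptour)$ and $t\le t'$, $\sigma\le\sigma'$ exists. We handle this with a ``better-than-witness'' invariant. Claim: whenever a witness-chain m-label $w$ is pruned or never pushed, some m-label $u$ with the same key and dominating values is pushed (or already expanded). From $u$, the remaining witness suffix is still feasible, because earlier time permits waiting, smaller $\sigma$ preserves capacity, and the identical $\vec b$ means the same targets remain. We then argue by induction on the number of remaining witness steps, $\ntar-\text{sum}(\vec b)$ plus remaining tours, that any m-label dominating a witness state leads to eventual return. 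Each m-label is expanded once; when expanded, it pushes successors dominating the next witness state or those successors are themselves dominated by stored m-labels. DFS termination, established above, guarantees that all pushed m-labels are eventually popped unless a return occurs earlier. We then show that the popped dominator is not itself pruned at Line \ref{algline:gen_feas_tours_dominate_expand}, or if it is, the pruner serves as the new dominator. Tie cases, where mutual domination pruning of an m-label by its own stored copy occurs, need care, and we resolve them by noting that the m-label stored in $\mathfrak{D}$ is the one pushed. At the final depth, a dominator has $\text{sum}(\vec b)=\ntar$ and triggers the return.
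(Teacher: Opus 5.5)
Your proposal is correct and follows essentially the paper's argument. Your ``m-label dominating a witness-chain state'' (same key $(\gamma,\vec b,\nptour)$, no larger $t$ and $\sigma$) is the paper's substitutability, and the same two facts carry both proofs: dominance survives taking the next witness step (waiting along $\tau_i$, capacity monotonicity, identical $\vec b$), and an element of $\mathfrak{D}$ not dominated by any \emph{other} stored m-label passes Line~\ref{algline:gen_feas_tours_check_dominate_expand} when popped because $\mathfrak{D}$ only grows. Choosing such an element directly is the cleanest way to settle your pruner-chain and tie cases. The only difference is bookkeeping: you argue by exhaustion and induction on remaining witness steps, while the paper maintains the invariant that the stack always holds a substitutable m-label. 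Your reordering of tours for condition 5 is unnecessary, since $\nptour=\nagt$ already forces the witness into its last tour.
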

\begin{proof}
First, suppose no $\mathcal{B}$-feasible solutions exist. Then condition 1 in the definition of a successor target-window for Alg. \ref{alg:gen_feas_tours} prevents Alg. \ref{alg:gen_feas_tours} from ever executing Line \ref{algline:gen_feas_tours_return}, which returns a $\mathcal{B}$-feasible solution. Since Alg. \ref{alg:gen_feas_tours} implicitly searches a finite tree of m-labels, it must terminate. Since Alg. \ref{alg:gen_feas_tours} cannot terminate on Line \ref{algline:gen_feas_tours_return}, it must terminate on Line \ref{algline:gen_feas_tours_return_emptyset}, which returns $\emptyset$.

Next, suppose there is some $\mathcal{B}$-feasible solution. As stated above, Alg. \ref{alg:gen_feas_tours} terminates, so we must now prove that Alg. \ref{alg:gen_feas_tours} terminates on Line \ref{algline:gen_feas_tours_return}, which returns some $\mathcal{B}$-feasible solution, rather than Line \ref{algline:gen_feas_tours_return_emptyset}. To prove that Alg. \ref{alg:gen_feas_tours} terminates on Line \ref{algline:gen_feas_tours_return}, we now show that the stack is never empty on Line \ref{algline:gen_feas_tours_while_loop}, which prevents termination on Line \ref{algline:gen_feas_tours_return_emptyset}.

Let $\mathcal{F}_\text{sol}$ be some $\mathcal{B}$-feasible solution with cardinality at least 1.\footnote{If there is at least 1 target, $|\mathcal{F}_\text{sol}| \geq 1$. Even if there are 0 targets, the solution containing a single tour that stays at the depot is $\mathcal{B}$-feasible, and has cardinality 1. Thus, if any $\mathcal{B}$-feasible solutions exist, some $\mathcal{B}$-feasible solution exists with cardinality at least 1.} Let $P_\text{sol} = (\Gamma^1, \Gamma^2, \dots, \Gamma^{|\mathcal{F}_\text{sol}|})$ be some permutation of $\mathcal{F}_\text{sol}$; note that while $\mathcal{F}_\text{sol}$ is a set of tours, $P_\text{sol}$ is a sequence of tours. We say that an m-label $(\gamma_{i,j}, t, \sigma, \vec{b}, n)$ is \emph{substitutable} into $P_\text{sol}$ if the following \emph{substitution conditions} are satisfied:
\begin{enumerate}
    \item $n \leq |\mathcal{F}_\text{sol}|$.\label{cond:substitution1}
    
    \item $\gamma_{i,j}$ is an element of $\Gamma^n$.\label{cond:substitution2}

    \item There is a speed-admissible agent trajectory executing $\Gamma^n$ that intercepts $\gamma_{i,j}$ no earlier than $t$.\label{cond:substitution3}

    \item $\sigma$ is no larger than the sum of demands of targets visited by $\Gamma^n$ at or before the first occurrence of $\gamma_{i,j}$.\label{cond:substitution4}
    
    \item For all targets $i^\dagger$ such that $\vec{b}[i^\dagger] = 1$, $i^\dagger$ is either visited by $\Gamma^n$ at or before $\gamma_{i,j}$, or visited by some $\Gamma^j$ with $j < n$.\label{cond:substitution5}
\end{enumerate}
The intuition behind the substitution conditions is as follows. By traversing the backpointers of m-label $u = (\gamma_{i,j}, t, \sigma, \vec{b}, n)$, we will obtain a sequence of partial tours $P_\text{sol}' = (\Gamma^{'1}, \Gamma^{'2}, \dots, \Gamma^{'n})$, where $\Gamma^{'n}$ is not a tour and the others are tours. Let $l = \Length(\Gamma^{'n})$, and let $\Gamma^n[l + 1:]$ be the suffix of $\Gamma^n$ beginning at index $l + 1$. Let $\Gamma^{''n}$ be the tour obtained by concatenating $\Gamma^{'n}$ with $\Gamma^n[l + 1:]$, where $\Gamma^{'n}$ comes first. If the substitution conditions are satisfied, we can construct a $\mathcal{B}$-feasible solution $\mathcal{F}_\text{sol}'' = \{\Gamma^{'1}, \Gamma^{'2}, \dots, \Gamma^{''n}, \Gamma^{n + 1}, \dots, \Gamma^{|\mathcal{F}_\text{sol}|}\}$, where we have essentially substituted partial tours associated with $u$ into $\mathcal{F}_\text{sol}$.

We now show that every time the condition on Line \ref{algline:gen_feas_tours_while_loop} is checked, the stack contains an m-label $(\gamma_{i, j}, t, \sigma, \vec{b}, n)$ substitutable into $P_\text{sol}$. Let $\Delta$ be the set of all m-labels that (i) have been popped from the stack, (ii) were not dominated on Line \ref{algline:gen_feas_tours_check_dominate_expand}, and (iii) are substitutable into $P_\text{sol}$. We have two cases:

\underline{Case 1:} suppose $\Delta = \emptyset$. In this case, $u = (\gamma_0, 0, 0, \underbrace{(0, 0, \dots, 0)}_{\ntar \text{ times}}, 1)$ must be on the stack. $u$ is substitutable into $P_\text{sol}$, since all of the following hold:
\begin{itemize}
    \item $1 \leq |\mathcal{F}_\text{sol}|$, satisfying substitution condition \ref{cond:substitution1}.

    \item $\gamma_0$ is an element of $\Gamma^1$, satisfying substitution condition \ref{cond:substitution2}.

    \item Some speed-admissible trajectory must execute $\Gamma^1$ since $\mathcal{F}_\text{sol}$ is a feasible MT-VRP solution, and all such trajectories intercept $\gamma_0$ at time $0$. This satisfies substitution condition \ref{cond:substitution3}.

    \item The sum of demands of targets visited by $\Gamma^1$ at or before the first occurrence of $\gamma^0$ is 0, satisfying substitution condition \ref{cond:substitution4}.

    \item No targets $i^\dagger$ have $\vec{b}[i^\dagger] = 1$, satisfying substitution condition \ref{cond:substitution5}.
\end{itemize}

\underline{Case 2:} suppose $\Delta \neq \emptyset$. In this case, let $u = (\gamma_{i,j}, t, \sigma, \vec{b}, n)$ be the element of $\Delta$ with largest possible $n$ and with $\gamma_{i,j}$ that has largest possible index in $\Gamma^{n}$.

\underline{Case 2.1:} suppose $\gamma_{i,j}$ is the second-to-last element of $\Gamma^{n}$. Suppose for the sake of contradiction that $n = |\mathcal{F}_\text{sol}|$. Since $u$ is substitutable into $P_\text{sol}$, $\vec{b}$ is all ones, meaning we returned on Line \ref{algline:gen_feas_tours_return} after popping $u$. This is a contradiction, because then we would not be checking Line \ref{algline:gen_feas_tours_while_loop} anymore. Thus, $n < |\mathcal{F}_\text{sol}|$. This means that when we expanded $u$, Line \ref{algline:new_partial_tour} pushed m-label $u' = (\gamma_0, 0, 0, \vec{b}, n + 1)$ onto the stack. $u'$ is substitutable into $P_\text{sol}$ since all of the following hold:
\begin{itemize}
    \item $n < |\mathcal{F}_\text{sol}|$ implies $n + 1 \leq |\mathcal{F}_\text{sol}|$, satisfying substitution condition \ref{cond:substitution1}.

    \item $\gamma_0$ is an element of $\Gamma^{n + 1}$, satisfying substitution condition \ref{cond:substitution2}.

    \item Some speed-admissible trajectory must execute $\Gamma^{n + 1}$ since $\mathcal{F}_\text{sol}$ is a feasible MT-VRP solution, and all such trajectories intercept $\gamma_0$ at time $0$. This satisfies substitution condition \ref{cond:substitution3}.

    \item The sum of demands of targets visited by $\Gamma^{n + 1}$ at or before the first occurrence of $\gamma^0$ is 0, satisfying substitution condition \ref{cond:substitution4}.

    \item Since $u$ is substitutable into $P_\text{sol}$, $\vec{b}$ satisfies substitution condition \ref{cond:substitution5}.
\end{itemize}

\underline{Case 2.2:} suppose $\gamma_{i, j}$ is not the second-to-last element of $\Gamma^{n}$. Let $\gamma_{i',j'}$ be the target-window that immediately follows $\gamma_{i, j}$ in $\Gamma^{n}$. $\gamma_{i',j'}$ must be a successor target-window of $u$, since all of the following hold:
\begin{itemize}
    \item $\Gamma^{n}$ traverses no edge in $\mathcal{B}$, by the $\mathcal{B}$-feasibility of $\mathcal{F}_\text{sol}$, and $\Gamma^{n}$ traverses edge $(\gamma_{i,j}, \gamma_{i',j'})$ by definition of $\gamma_{i',j'}$, so $(\gamma_{i,j}, \gamma_{i',j'}) \in \mathcal{E}_\text{tw} \setminus \mathcal{B}$. This satisfies condition 1 in the definition of a successor target-window.

    \item Since $u$ is substitutable into $P_\text{sol}$, there is a speed-admissible trajectory $\taua$ executing $\Gamma^{n}$ intercepting $\gamma_{i,j}$ no earlier than $t$ (substitution condition \ref{cond:substitution3}). This means $t \leq \LFDT(\gamma_{i,j}, \gamma_{i',j'})$, since $\taua$ departs $\gamma_{i,j}$ no earlier than $t$, then intercepts $\gamma_{i',j'}$. Thus condition 2 in the definition of a successor target-window is satisfied.

    \item Let $\sigma_\text{dep}$ be the capacity depleted by $\Gamma^{n}$ upon visiting $\gamma_{i,j}$. Since $u$ is substitutable into $P_\text{sol}$, $\sigma \leq \sigma_\text{dep}$ (substitution condition \ref{cond:substitution4}), so $\sigma + d_{i'} \leq \sigma_\text{dep} + d_{i'}$. The RHS is the capacity depleted by $\Gamma^n$ upon visiting $\gamma_{i',j'}$, and since $\Gamma^n$ is a tour, the RHS is no larger than $d_\text{max}$. Thus $\sigma + d_{i'} \leq d_\text{max}$, so condition 3 in the definition of a successor target-window is satisfied.

    \item Since $\mathcal{F}_\text{sol}$ is a feasible MT-VRP solution where $\Gamma^n$ visits target $i'$, $\Gamma^j$ does not visit target $i'$ for all $j < n$. Since $u$ is substitutable into $P_\text{sol}$, and since target $i'$ is visited after $\gamma_{i,j}$ in $P_\text{sol}$, we have $\vec{b}[i'] = 0$ (substitution condition \ref{cond:substitution5}), satisfying condition 4 in the definition of a successor target-window.

    \item Suppose $n = \nagt$. Since $u$ is substitutable into $P_\text{sol}$, there is a speed-admissible trajectory $\taua$ executing $\Gamma^n$ and intercepting $\gamma_{i,j}$ at time no earlier than $t$ (substitution condition \ref{cond:substitution3}). Let $t^\#$ be the time at which $\taua$ intercepts $\gamma_{i',j'}$. The speed-admissibility of $\taua$ ensures that $\EFAT(\tau_i(t), t, \gamma_{i',j'}) \leq t^\#$. Furthermore, since each target $i''$ not yet intercepted at time $t^\#$ by $\taua$ is intercepted by $\taua$ at some future time, $t^\# \leq \max\limits_{j'' \in \{1, 2, \dots, n_\text{win}(i'')\}}\LFDT(\gamma_{i',j'}, \gamma_{i'',j''})$ for all such targets $i''$. This satisfies condition 5 in the definition of a successor target-window.
\end{itemize}
Let $u' = (\gamma_{i',j'}, t', \sigma', \vec{b'}, n)$ be m-label associated with the successor target-window $\gamma_{i',j'}$ when expanding $u$.

\underline{Case 2.2.1:} Suppose $u'$ was dominated by some $u'' = (\gamma_{i',j'}, t'', \sigma'', \vec{b}', n)$ in $\mathfrak{D}[\gamma_{i',j'}, \vec{b}', n)]$ on Line \ref{algline:gen_feas_tours_dominate_before_push}. Without loss of generality, let $u''$ be an element of $\mathfrak{D}[(\gamma_{i',j'}, \vec{b}', n)]$ that is not dominated by any other element of $\mathfrak{D}[(\gamma_{i',j'}, \vec{b}', n)]$. $u'$ and $u''$ both are substitutable into $P_\text{sol}$, since all of the following hold:
\begin{itemize}
    \item Since $u$ is substitutable into $P_\text{sol}$, $n \leq |\mathcal{F}_\text{sol}|$, so both $u'$ and $u''$ satisfy substitution condition \ref{cond:substitution1}.

    \item $\gamma_{i',j'}$ is an element of $\Gamma^n$ by definition of $\gamma_{i',j'}$, so $u'$ and $u''$ both satisfy substitution condition \ref{cond:substitution2}.

    \item Since $u$ is substitutable into $P_\text{sol}$, there is a speed-admissible trajectory $\taua$ executing $\Gamma^n$ and intercepting $\gamma_{i,j}$ no earlier than $t$, which means $\taua$ intercepts $\gamma_{i',j'}$ no earlier than $t'$. This means $u'$ satisfies substitution condition \ref{cond:substitution3}. Furthermore, since $u''$ dominates $u'$, $t'' \leq t'$, so $\taua$ intercepts $\gamma_{i',j'}$ no earlier than $t''$, satisfying substitution condition \ref{cond:substitution3}.

    \item Let $k$ be the index of the first occurrence of $\gamma_{i,j}$ in $\Gamma^n$, let $\Gamma^n[:k]$ be the length-$k$ prefix of $\Gamma^n$, and let $\sigma_\text{dep}$ be the capacity depleted by $\Gamma^n[:k]$. Since $u$ is substitutable into $P_\text{sol}$, $\sigma \leq \sigma_\text{dep}$, so $\sigma' = \sigma + d_{i'} \leq \sigma_\text{dep} + d_{i'}$, where the RHS is the capacity depleted by $\Gamma^n[:k + 1]$. This means $u'$ satisfies substitution condition \ref{cond:substitution4}. Additionally, since $u''$ dominates $u'$, $\sigma'' \leq \sigma'$, so $\sigma'' \leq \sigma_\text{dep} + d_{i'}$, so $u''$ satisfies substitution condition \ref{cond:substitution4}.

    \item $\vec{b}'$ is identical to $\vec{b}$, except $\vec{b}'[i'] = 1$ while $\vec{b}[i'] = 0$. Since $u$ is substitutable into $P_\text{sol}$, for any target $i^\dagger \neq i'$ with $\vec{b}[i^\dagger] = 1$, $i^\dagger$ is visited by $\Gamma^n$ at or before $\gamma_{i,j}$ (and thus at or before $\gamma_{i',j'}$), or $i^\dagger$ is visited by some $\Gamma^j$ with $j < n$. For $i^\dagger = i'$, $\vec{b}[i^\dagger] = 1$, and $i^\dagger$ is visited by $\Gamma^n$ at $\gamma_{i',j'}$. Thus $u'$ and $u''$ both satisfy substitution condition \ref{cond:substitution5}.
\end{itemize}
Since $u''$ is in $\mathfrak{D}$, $u''$ must have been pushed onto the stack at some search iteration. Since the index in $\Gamma^n$ of its target-window $\gamma_{i',j'}$ is larger than the index of $\gamma_{i,j}$ (the target-window of $u$), $u'' \notin \Delta$, since we chose $u$ as the m-label in $\Delta$ with its target-window having largest index in $\Gamma^n$. This means one or more conditions among (i), (ii), and (iii) to be in $\Delta$ is not satisfied by $u''$. (ii) is satisfied, because we chose $u''$ so that it is not dominated by other m-labels at $(\gamma_{i',j'}, \vec{b}', n)$. (iii) is satisfied because $u''$ is substitutable into $P_\text{sol}$. Thus (i) is not satisfied, so $u''$ must still be on the stack. Thus we have an m-label on the stack that is substitutable into $P_\text{sol}$.

\underline{Case 2.2.2:} Suppose $u'$ was not dominated on Line \ref{algline:gen_feas_tours_dominate_before_push}. Then $u'$ was pushed onto the stack. Also, we proved in Case 2.2.1 that $u'$ satisfies substitution conditions \ref{cond:substitution1} to \ref{cond:substitution5}. This means we have an m-label on the stack that is substitutable into $P_\text{sol}$.
\end{proof}

\end{document}